\pgfplotsset{compat=1.16}
\newtheorem{theorem}{Theorem}
\newtheorem{example}[theorem]{Example}
\newcommand{\N}{\mathbb{N}}
\newcommand{\effortvec}{\vec{\beta}} 
\newcommand{\opteffortvec}{\vec{\beta}^\star} 
\newcommand{\opteffort}[1]{\beta_{#1}^\star} 
\newcommand{\effort}[1]{\beta_{#1}} 
\newcommand{\discretization}{\Psi} 
\newcommand{\discretizationlevel}[1]{\psi_{#1}} 
\newcommand{\featurevec}[1]{\vec{y}_{#1}} 
\newcommand{\Xit}{X_i^{(t)}} 
\newcommand{\discretizationgap}{\Delta} 
\newcommand{\maxregret}{R_{\text{max}}} 
\newcommand{\minregret}{R_{\text{min}}} 
\newcommand{\selfucb}{\textsc{selfUCB}} 
\newcommand{\ucb}{\textsc{UCB}} 
\newcommand{\reward}{\texttt{reward}} 
\title{Dual-Mandate Patrols: Multi-Armed Bandits for Green Security}
\author {
        Lily Xu,\textsuperscript{\rm 1}
        Elizabeth Bondi,\textsuperscript{\rm 1}
        Fei Fang,\textsuperscript{\rm 2}
        Andrew Perrault,\textsuperscript{\rm 1}
        Kai Wang,\textsuperscript{\rm 1}
        Milind Tambe\textsuperscript{\rm 1}\\
}
\begin{document}

\maketitle

\begin{abstract}
Conservation efforts in green security domains to protect wildlife and forests are constrained by the limited availability of defenders (i.e., patrollers), who must patrol vast areas to protect from attackers (e.g., poachers or illegal loggers). Defenders must choose how much time to spend in each region of the protected area, balancing exploration of infrequently visited regions and exploitation of known hotspots. We formulate the problem as a stochastic multi-armed bandit, where each action represents a patrol strategy, enabling us to guarantee the rate of convergence of the patrolling policy. However, a naive bandit approach would compromise short-term performance for long-term optimality, resulting in animals poached and forests destroyed. To speed up performance, we leverage smoothness in the reward function and decomposability of actions. We show a synergy between Lipschitz-continuity and decomposition as each aids the convergence of the other. In doing so, we bridge the gap between combinatorial and Lipschitz bandits, presenting a no-regret approach that tightens existing guarantees while optimizing for short-term performance. We demonstrate that our algorithm, LIZARD, improves performance on real-world poaching data from Cambodia.
\end{abstract}

\section{Introduction}



Green security efforts to protect wildlife, forests, and fisheries require defenders (patrollers) to conduct patrols across protected areas to guard against attacks (illegal activities) \cite{lober1992using}. For example, to prevent poaching, rangers conduct patrols to remove snares laid out to trap animals (Fig.~\ref{fig:rangers}). Green security games have been proposed to apply Stackelberg security games, a game-theoretic model of the repeated interaction between a defender and an attacker, to domains such as the prevention of illegal logging, poaching, or overfishing \cite{fang2015security}. 
A growing body of work develops scalable algorithms for Stackelberg games \cite{basilico2009leader,blum2014learning}.
Subsequent work has focused more on applicability to the real-world, employing machine learning to predict attack hotspots then using game-theoretic planning to design patrols 
\cite{nguyen2016capture,gholami2018adversary}. 

However, many protected areas lack adequate and unbiased past patrol data, disabling us from learning a reasonable adversary model in the first place \cite{moreto2015poaching}. As one of many examples, Bajo Madidi in the Bolivian Amazon was newly designated as a national park in 2019 \cite{franco2020rare}. The park is plagued with illegal logging, and patrollers do not have historical data from which to make predictions. The defenders do not want to spend patrol effort solely on information gathering; they must simultaneously maximize detection of attacks. As green security efforts get deployed on an ever-larger scale in hundreds of protected areas around the world \cite{xu2020stay}, addressing this information-gathering challenge becomes crucial.


\begin{figure}
  \centering
  \begin{subfigure}[t]{0.65\columnwidth}
  \centering
  \includegraphics[height=3.5cm]{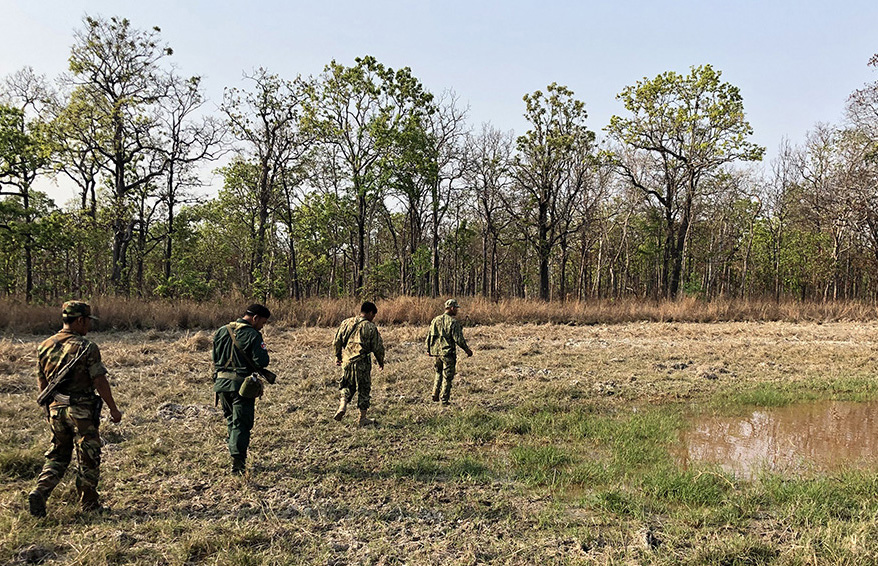}
  \end{subfigure}
  \begin{subfigure}[t]{0.32\columnwidth}
  \centering
  \includegraphics[height=3.5cm]{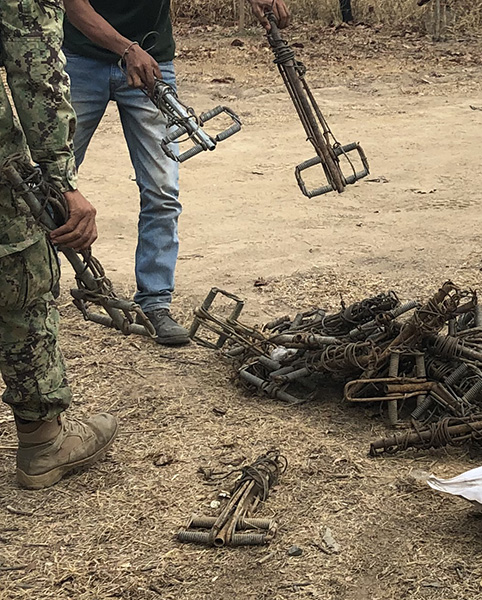}
  \end{subfigure}
  \caption{Rangers searching for snares (right) near a waterhole (left) in Srepok Wildlife Sanctuary in Cambodia. The waterhole is frequented by deer, pig, and bison, which are targeted by poachers.}
  \label{fig:rangers}
\end{figure}

Motivated by these practical needs, we focus on
conducting \textit{dual-mandate patrols}\footnote{Code available at \url{https://github.com/lily-x/dual-mandate}.}, with the goal of simultaneously detecting illegal activities and collecting valuable data to improve our predictive model, achieving higher long-term reward. 
The key challenge with dual-mandate patrols is the exploration--exploitation tradeoff: whether to follow the best patrol strategy indicated by historical data or conduct new patrols to get a better understanding of the attackers. Some recent work proposes using multi-armed bandits to formulate the problem \cite{xu2016playing,gholami2019dont}. Despite their advances, 
we show that these approaches require unrealistically long time horizons to achieve good performance. In the real world, these initial losses are less tolerable and can lead to stakeholders abandoning such patrol-assistance systems. As we are designing this system for future deployment, it is critical to account for these practical constraints. 
In this paper, we address real-world characteristics of green security domains to design dual-mandate patrols, prioritizing strong performance in the short term as well as long term. Concretely, we introduce LIZARD, a bandit algorithm that accounts for (i)~decomposability of the reward function, (ii)~
smoothness of the decomposed reward function across features, (iii)~monotonicity of rewards as patrollers exert more effort, and (iv)~availability of historical data. 
LIZARD leverages both decomposability and Lipschitz continuity simultaneously, \emph{bridging the gap between combinatorial and Lipschitz bandits}.
We prove that LIZARD achieves no-regret when adaptively discretizing the metric space, generalizing results from both \citet{chen2016combinatorial} and \citet{kleinberg2019bandits}. Specifically, we improve upon the regret bound of Lipschitz bandits for non-trivial cases with more than one dimension and extend combinatorial bandits to continuous spaces. 
Additionally, we show that LIZARD dramatically outperforms existing algorithms on real poaching data from Cambodia.



\section{Background}

Green security domains have been extensively modeled as green security games \cite{haskell2014robust,fang2016deploying,mc2016preventing,kamra2018policy}. In these games, resource-constrained defenders protect large areas (e.g., forests, savannas, wetlands) from adversaries who repeatedly attack them. Most work has focused on learning attacker behavior models from historical data \cite{nguyen2016capture,gholami2018adversary} and using these models to plan patrols with limited lookahead to prevent future attacks \cite{fang2015security,xu2017optimal}.

Despite successful deployment in some conservation areas \cite{xu2020stay}, researchers have recognized that it is not always possible to have abundant historical data 
when first deploying a green security algorithm. Fig.~\ref{fig:historical-data} demonstrates the shortcomings of a pure exploitation approach. Using a simulator built from real-world poaching data, we observe a large shortfall in reward unless an unrealistically large amount of historical data is collected. Naively exploiting historical data compromises reward unless a very large amount of data is collected. The gap between the orange and dashed green lines reveals the opportunity cost (regret) of acting solely based off historical data.

Some recent work has begun to consider the need to also explore. \citet{xu2016playing} follow an online learning paradigm, modeling the defender actions as pulling a set of arms (targets to patrol) at each timestep against an attacker who adversarially sets the rewards at each target. The model is then solved using FPL-UE, a variant of Follow the Perturbed Leader (FPL) algorithm \cite{kalai2005efficient}. \citet{gholami2019dont} propose a hybrid model, MINION, that employs the FPL algorithm to choose between two meta arms representing FPL-UE and a supervised learning model based on historical data.
However, both papers ignore domain structure such as feature similarity between targets and do not incorporate existing historical data into the online learners.

\begin{figure}
  \centering
  \begin{tikzpicture}
\pgfplotsset{
  width=0.5\linewidth,
  height=0.35\linewidth,
  xtick pos=left,
  ytick pos=left,
  tick label style={font=\tiny},
  ymajorgrids=true,
}

\begin{axis}[ 
  at={(0, 0)},
  xlabel={\footnotesize{Num. timesteps historical data}},
  ylabel absolute, ylabel style={ yshift=-.4cm},
  ylabel={\footnotesize{Avg. reward}},
  xlabel near ticks,
  legend cell align={left},
  legend style={
    align=left,
    at={(2.4, 1.1)},
    legend columns=1,
    font=\scriptsize, 
    draw=none,
    fill=none,
    /tikz/every even column/.append style={column sep=0.15cm}},
  ymin=.5,
  ymax=1,
  xmin=0,
  xmax=2000,
]

\addplot [dashed, green!40!gray, thick, mark=none] table [x=t, y=avg_top_true, col sep=comma] {data/historical_data_N100_top10.csv}; \addlegendentry{true top 10 targets}
\addplot [orange, thick, mark=none] table [x=t, y=avg_true, col sep=comma] {data/historical_data_N100_top10.csv}; \addlegendentry{empirical top 10 targets}

\addplot [draw=none, mark=none, name path=empirical_high] table [x=t, y expr=\thisrowno{1}+\thisrowno{2}, col sep=comma] {data/historical_data_N100_top10.csv};
\addplot [draw=none, mark=none, name path=empirical_low] table [x=t, y expr=\thisrowno{1}-\thisrowno{2}, col sep=comma] {data/historical_data_N100_top10.csv};
\tikzfillbetween[of=empirical_low and empirical_high, on layer=main]{orange, opacity=0.2};

\end{axis}

\end{tikzpicture}
  \caption{Naively protecting 10 out of 100 potential poaching targets based on our predictions would require 6 years of data to accurately protect the most important 10 targets.} 
  \label{fig:historical-data}
\end{figure}
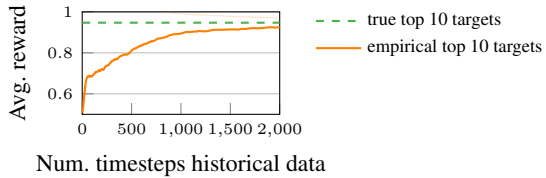



There is a plethora of work on multi-armed bandits (MABs) \cite{bubeck2012regret}.
For brevity, we focus on confidence bound--based algorithms; other common methods include epsilon-greedy and Thompson sampling \cite{lattimore2018bandit}. 
For stochastic MAB with finite arms, the seminal UCB1 algorithm \cite{auer2002finite} always chooses
the arm with the highest upper confidence bound and achieves no-regret, i.e., the expected reward is sublinearly close to always pulling the best arm in hindsight. For continuous arms, we must impose assumptions on the payoff function for tractability. Lipschitz continuity bounds the rate of change of a function~$\mu$, requiring that $|\mu(x) - \mu(y) | \leq L \cdot \mathcal{D}(x, y)$ for two points $x, y$ and distance function $\mathcal{D}$. We refer to $L$ as the Lipschitz constant. \citet{kleinberg2019bandits} propose the \emph{zooming algorithm} for Lipschitz MAB, which extends UCB1 with an adaptive refinement step to place more arms in high-reward areas of the continuous space. 
A parallel line of work applies Lipschitz-continuity assumptions to bandits to generate tighter confidence bounds \cite{bubeck2011x,magureanu2014lipschitz}. \citet{chen2016combinatorial} extend UCB1 to the combinatorial case with the \emph{CUCB algorithm}, which tracks individual payoffs separately and uses an oracle to select the best combination. However, CUCB does not extend to continuous arms. Our proposed algorithm, LIZARD, outperforms both the zooming and CUCB algorithms by explicitly handling arms that are both Lipschitz-continuous and combinatorial. 

Wildlife decline due to poaching demands fast action, so we cannot afford the infinite-time horizon usually considered by bandit algorithms. The case of short-horizon MABs is not well-studied. Some papers of this type \cite{gray2020regression} focus on adjusting exploration patterns rather than not address the types of challenges induced by smoothness and combinatorial rewards, as are present in the domain we consider.

\section{Problem Description}
\label{sec:problem-description}



The protected area for which we must plan patrols is discretized into $N$ targets, each associated with a feature vector $\featurevec{i} \in \mathbb{R}^{K}$ which is static across the time horizon~$T$. In the poaching prevention domain, for example, the $K$~features include geospatial characteristics such as distance to river, forest cover, animal density, and slope. We consider each timestep to be a short period of patrolling, e.g., a day, and a time horizon that reflects our foreseeable horizon for patrol planning, e.g., 1.5 years, corresponding to 500 timesteps. 


In each round, the defender determines an \emph{effort vector} $\effortvec = (\effort{1}, \ldots, \effort{N})$ which specifies the amount of effort to spend on each target. The defender has a budget~$B$ for total effort, i.e., $\sum_i \effort{i} \leq B$. In practice, $\effort{i}$ may represent the number of hours spent on foot patrolling in target~$i$. If the defender has two teams of patrollers, each of which can patrol for 5 hours a day, then $B=10$.
The planned patrols have to be conveyed clearly to the \emph{human} patrollers on the ground to then be executed \cite{plumptre2014efficiently}. Thus, an arbitrary value of $\effort{i}$ may be impractical. For example, we may ask the patrollers to patrol in an area for 30 minutes, but not 21.3634 minutes. Therefore, we enforce discretized patrol effort levels, requiring $\effort{i} \in \discretization=\{\discretizationlevel{1}, \ldots, \discretizationlevel{J}\}$ for $J$ levels of effort. 

Some targets may be attacked. The reward of a patrol corresponds to the total number of targets where attacks were detected \cite{critchlow2015spatiotemporal}. 
Let the expected reward of a patrol vector~$\effortvec$ be $\mu(\effortvec)$. Our objective is to specify an effort vector $\effortvec^{(t)}$ for each timestep~$t$ in an online fashion to minimize regret with respect to the optimal effort vector $\opteffortvec$ against a stochastic adversary, where regret is defined as $T \mu(\opteffortvec) - \sum_{t=1}^{T} \mu(\effortvec^{(t)})$. 

In practice, the likelihood of a defender detecting an attack is dependent on the amount of patrol effort exerted. In previous work on poaching prevention, a target can represent a large region, e.g., $1 \times 1$ km area, where snares are well-hidden. Thus, spending more time means the human patrollers can check the whole region more thoroughly and are more likely to detect snares. 
We represent the defender's expected reward at target~$i$ as a function $\mu_i(\effort{i}) \in [0, 1]$.
We define random variables $X^{(t)}_{i}$ as the observed reward (attack or no attack) from target~$i$ at time~$t$. Then $X^{(t)}_{i}$ follows a Bernoulli distribution with mean $\mu_i(\effort{i}^{(t)})$ with effort $\effort{i}^{(t)}$.

\subsection{Domain Characteristics}
\label{sec:characteristics}


We leverage the following four characteristics pervasive throughout green security domains to direct our approach. 

\paragraph{Decomposability} The overall expected reward for the defender is decomposable across targets and additive. 
For executing a patrol with effort~$\effortvec$ across all targets, the expected composite reward is a function $\mu(\effortvec) = \sum_{i=1}^{N} \mu_i(\effort{i})$.\paragraph{Lipschitz-continuity} As discussed, the expected reward to the defender at target $i$ is given by the function $\mu_i(\effort{i})$, which is dependent on effort $\effort{i}$. Furthermore, the expected reward depends on the features~$\featurevec{i}$ of that target, that is, $\mu_i(\effort{i}) = \widetilde{\mu}(\featurevec{i}, \effort{i})$ for all $i$. 
Past work to predict poaching patterns using machine learning models, which rely on assumptions of Lipschitz continuity, have been shown to perform well in real-world field experiments \cite{xu2020stay}. Thus, we assume that the reward function $\widetilde{\mu}(\cdot, \cdot)$ is Lipschitz-continuous in feature space as well as across effort levels.
That is, two distinct targets in the protected area with identical features will have identical reward functions, and two targets $a$ and $b$ with features $\featurevec{a}$, $\featurevec{b}$ and effort $\effort{a}$, $\effort{b}$ have rewards that differ by no more than
\begin{align}
| \widetilde{\mu}(\featurevec{a}, \effort{a}) - \widetilde{\mu}(\featurevec{b}, \effort{b}) | \leq L \cdot \mathcal{D}((\featurevec{a}, \effort{a}), (\featurevec{b}, \effort{b}))
\end{align}
for some Lipschitz constant~$L$ and distance function $\mathcal{D}$, such as Euclidean distance. Hence, the composite reward $\mu(\effortvec)$ is also Lipschitz-continuous. 

For simplicity of notation, we assume that the same Lipschitz constant applies over the entire space. In practice, we could achieve tighter bounds by separately estimating the Lipschitz constant for each dimension. 



\paragraph{Monotonicity} The more effort spent on a target, the higher the expected reward as our likelihood of finding a snare increases. That is, we assume $\mu(\effort{i})$ is monotonically non-decreasing in $\effort{i}$. Additionally, we assume that zero effort corresponds with zero reward ($\mu_i(0) = 0$), as defenders cannot prevent attacks on targets they do not visit. 

\paragraph{Historical data} Finally, many conservation areas have data from past patrols, which we use to warm start the online learning algorithm. To improve our short-term performance, we are careful in how we integrate historical data, as described in Sec.~$\ref{sec:historical-data}$. 



\subsection{Domain Challenges}

There are additional challenges in green security domains that prevent us from directly applying existing algorithms. No-regret guarantees describe the performance at infinite time horizons, but, practically, we require strong performance within short time frames. Losses in initial rounds are less tolerable, as the consequences of ineffective patrols are deforestation and loss of wildlife. In addition, conservation managers may lose faith in using an online learning approach. In response, we provide an algorithm with infinite horizon guarantees and also empirically show strong performance in the short term.

\section{LIZARD Online Learning Algorithm}\label{sec:algorithm}

We present our algorithm, LIZARD (LIpschitZ Arms with Reward Decomposability), for online learning in green security domains. 


Standard bandit algorithms suffer from the curse of dimensionality: the set of arms would be $\discretization^N$, which has size $J^N$. 
Thus, we cast the problem as a combinatorial bandit \cite{chen2016combinatorial}. At each iteration, we choose a patrol strategy $\effortvec$ that satisfies the budget constraint and observe the patrol outcome of each target~$i$ under the chosen effort~$\effort{i}$. An \emph{arm} is one effort level~$\effort{i}$ on a specific target~$i$; a \emph{super arm} is $\effortvec$, a collection of $N$ arms.
By tracking decomposed rewards, we only need to track observations from $NJ$ arms. 

We now maintain exponentially fewer samples, but the number of arms is still prohibitively large. With, say, $N=100$ targets and $J=10$ effort levels, we would need to develop estimates of reward for 1,000 arms. To address this challenge, we leverage feature similarity between arms to speed up learning, demonstrating the synergy between decomposability and Lipschitz-continuity. In this section, we show how we transfer knowledge between arms with similar effort levels and features.

\begin{algorithm}[t]
\caption{LIZARD}
\label{alg:decomposed}
\textbf{Inputs:} Number of targets~$N$, time horizon~$T$, budget~$B$, discretization levels~$\discretization$, target features~$\featurevec{i}$ 
\\
$n(i, \discretizationlevel{j}) = 0$, $\reward(i, \discretizationlevel{j}) = 0$ ~ $\forall i \in [N], j \in [J]$ \\
\For{$t = 1, 2, \ldots, T$} {
Compute $\ucb_t$ using Eq.~\ref{eqn:tighter_ucb}\\
Solve $\mathcal{P}(\ucb_t, B, N, T)$ to select super arm $\effortvec$ \\
Observe rewards $X_1^{(t)}, X_2^{(t)}, \ldots, X_n^{(t)}$ \\
\For{$i = 1, 2, \ldots, N$} {
$\reward(i, \effort{i}) = \reward(i, \effort{i}) + \Xit$ \\
$n(i, \effort{i}) = n(i, \effort{i}) + 1$
}
}
\end{algorithm}

\subsection{Upper Confidence Bounds with Similarity}

We take an upper confidence bound (UCB) approach where the rewards are tracked separately for different arms. 
We show that we can incorporate Lipschitz-continuity of the reward functions into the UCB of each arm to achieve tighter confidence bounds. 

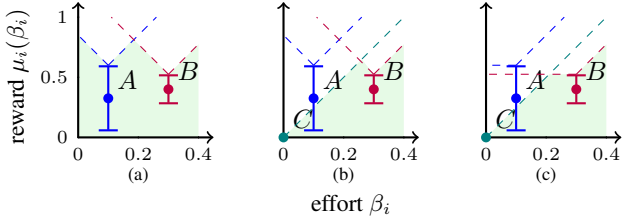
\begin{figure}[t]
    \begingroup
    \tikzset{every picture/.style={scale=0.4}}%
    \centering
    \begin{subfigure}[t]{0.36\columnwidth}
        \centering
        \begin{tikzpicture}
[every label/.append style={text=red, font=\tiny}]








    
\fill[fill=green!80!black,fill opacity=0.1] (0, 0) -- (4, 0) -- (4, 3.1) -- (3, 2.1)  -- (1.825, 3.275) -- (1, 2.4) -- (0, 3.4) -- cycle;

\draw[thick,->] (0, 0) -- (4.5, 0) node[anchor=west] {}; 
\draw[thick,->] (0, 0) -- (0, 4.5) node[rotate=90, anchor=south] {}; 

\node[rotate=90, anchor=east] at (-2, 4.6) {\small reward $\mu_i(\beta_i)$};
\node[anchor=north] at (2.25, -1.6) {\phantom{\small effort $\beta_i$}};

\foreach \x in {0, 0.2, 0.4}
    \draw (10*\x cm,1pt) -- (10*\x cm,-1pt) node[anchor=north] {\scriptsize $\x$};
\foreach \y in {0, 0.5, 1}
    \draw (1pt,4*\y cm) -- (-1pt,4*\y cm) node[anchor=east] {\scriptsize $\y$};
    

\node[blue,fill,circle,scale=0.4] at (1, 1.3) {};
\node[anchor=south west] at (1, 1.3) {$A$};
\draw[thick,|-|,blue] (1, 0.2) -- (1, 2.4);
\draw[thin,dashed,blue] (1, 2.4) -- (0, 3.4);
\draw[thin,dashed,blue] (1, 2.4) -- (2.6, 4);

\node[purple,fill,circle,scale=0.4] at (3, 1.6) {};
\node[anchor=south west] at (3, 1.6) {$B$};
\draw[thick,|-|,purple] (3, 1.1) -- (3, 2.1);
\draw[thin,dashed,purple] (3, 2.1) -- (4, 3.1);
\draw[thin,dashed,purple] (3, 2.1) -- (1.1, 4);



\node[anchor=north] (a1) at (2, -.7){\scriptsize{(a)}};


\end{tikzpicture}
        \label{fig:ucb-lipschitz}
    \end{subfigure}
    \hfill
    \begin{subfigure}[t]{0.25\columnwidth}
        \centering
        \begin{tikzpicture}







    
\fill[fill=green!80!black,fill opacity=0.1] (0, 0) -- (4, 0) -- (4, 3.1) -- (3, 2.1)  -- (2.6, 2.6) -- cycle;

\draw[thick,->] (0, 0) -- (4.5, 0) node[anchor=west] {}; 
\draw[thick,->] (0, 0) -- (0, 4.5) node[anchor=south] {}; 

\node[anchor=north] at (2.25, -1.6) {\small effort $\beta_i$};

\foreach \x in {0, 0.2, 0.4}
    \draw (10*\x cm,1pt) -- (10*\x cm,-1pt) node[anchor=north] {\scriptsize $\x$};
    

\node[teal,fill,circle,scale=0.4] at (0, 0) {};
\node[anchor=south west] at (0, 0) {$C$};
\draw[thin,-,dashed,teal] (0, 0) -- (4, 4);

\node[blue,fill,circle,scale=0.4] at (1, 1.3) {};
\node[anchor=south west] at (1, 1.3) {$A$};
\draw[thick,|-|,blue] (1, 0.2) -- (1, 2.4);
\draw[thin,dashed,blue] (1, 2.4) -- (0, 3.4);
\draw[thin,dashed,blue] (1, 2.4) -- (2.6, 4);

\node[purple,fill,circle,scale=0.4] at (3, 1.6) {};
\node[anchor=south west] at (3, 1.6) {$B$};
\draw[thick,|-|,purple] (3, 1.1) -- (3, 2.1);
\draw[thin,dashed,purple] (3, 2.1) -- (4, 3.1);
\draw[thin,dashed,purple] (3, 2.1) -- (1.1, 4);



\node[anchor=north] (a1) at (2, -.7){\scriptsize{(b)}};


\end{tikzpicture}
        \label{fig:ucb-zero}
    \end{subfigure}
    \hfill
    \begin{subfigure}[t]{0.25\columnwidth}
        \centering
        \begin{tikzpicture}






    
\fill[fill=green!80!black,fill opacity=0.1] (0, 0) -- (4, 0) -- (4, 3.1) -- (3, 2.1) -- (2.1, 2.1) -- cycle;

\draw[thick,->] (0, 0) -- (4.5, 0) node[anchor=south] {}; 
\draw[thick,->] (0, 0) -- (0, 4.5) node[anchor=south] {}; 

\foreach \x in {0, 0.2, 0.4}
    \draw (10*\x cm,1pt) -- (10*\x cm,-1pt) node[anchor=north] {\scriptsize $\x$};
    

\node[teal,fill,circle,scale=0.4] at (0, 0) {};
\node[anchor=south west] at (0, 0) {$C$};
\draw[thin,-,dashed,teal] (0, 0) -- (4, 4);

\node[anchor=north] at (2.25, -1.6) {\phantom{\small effort $\beta_i$}};

\node[blue,fill,circle,scale=0.4] at (1, 1.3) {};
\node[anchor=south west] at (1, 1.3) {$A$};
\draw[thick,|-|,blue] (1, 0.2) -- (1, 2.4);
\draw[thin,dashed,blue] (1, 2.4) -- (0, 2.4);
\draw[thin,dashed,blue] (1, 2.4) -- (2.6, 4);

\node[purple,fill,circle,scale=0.4] at (3, 1.6) {};
\node[anchor=south west] at (3, 1.6) {$B$};
\draw[thick,|-|,purple] (3, 1.1) -- (3, 2.1);
\draw[thin,dashed,purple] (3, 2.1) -- (4, 3.1);
\draw[thin,dashed,purple] (3, 2.1) -- (0, 2.1);



\node[anchor=north] (a1) at (2, -.7){\scriptsize{(c)}};

\end{tikzpicture}
        \label{fig:ucb-increasing}
    \end{subfigure}
    \endgroup
    \caption{The Lipschitz assumption enables us to prune confidence bounds. We show the impact of each $\selfucb$s on the $\ucb$s of other arms in effort space of target~$i$. The solid brackets represent the $\selfucb$s. The dashed lines represent the bounds imposed by each arm on the rest of the space. The shaded green region covers the potential value of the reward function at different levels of effort. We visualize the additive effect of (a)~Lipschitz-continuity, (b)~zero effort yields zero reward, and (c)~monotonicity. Note that these plots demonstrate $\ucb$s for one target and that Lipschitz continuity also applies \textit{across} targets based on feature similarity. 
    }
    \label{fig:ucb}
\end{figure}
Let $\bar{\mu}_t(i, j) = \reward_t(i, \discretizationlevel{j}) / n_t(i, \discretizationlevel{j})$ be the average reward of target~$i$ at effort~$\discretizationlevel{j}$ given cumulative empirical reward $\reward_t(i, \discretizationlevel{j})$ over $n_t(i, \discretizationlevel{j})$ arm pulls by timestep~$t$. Let $r_t(i, j)$ be the \emph{confidence radius} at time~$t$ defined as 
\begin{small}
\begin{align}
    r_t(i, j) = \sqrt{\frac{3\log(t)}{2 n_t(i, \discretizationlevel{j})}} \ .
    \label{eq:conf}
\end{align}
\end{small}%
We distinguish between UCB and a term we call $\selfucb$. The $\selfucb$ of an arm $(i, j)$ representing target~$i$ with effort level~$j$ at time~$t$ is the UCB of an arm based only on its own observations, given by
\begin{align}
    \selfucb_t(i, j) = \bar{\mu}_t(i, j) + r_t(i, j) \ .
\end{align}
This definition of $\selfucb$ corresponds with the standard interpretation of confidence bounds from UCB1 \cite{auer2002finite}. The UCB of an arm is then computed by taking the minimum of the bounds of all $\selfucb$s as applied to the arm. These bounds are determined by adding the distance between arm $(i, j)$ and all other arms $(u, v)$ to the $\selfucb$:
\begin{align}\label{eqn:tighter_ucb}
\ucb_t(i, j) &= \min\limits_{\substack{u \in [N] \\ v \in [J]}} \{ \selfucb_t(u, v) + L \cdot \textit{dist} \} \\ 
\textit{dist} &= \max\{0,  \discretizationlevel{v} - \discretizationlevel{j} \} + \mathcal{D}(\featurevec{i}, \featurevec{u}) \nonumber
\end{align}
which exploits Lipschitz continuity between the arms. See Fig.~\ref{fig:ucb} for a visualization. The distance between two arms depends on the similarity of their features and effort. The first term of $\textit{dist}$ considers similarity of effort level (Fig.~\ref{fig:ucb}a); the second considers feature similarity between targets according to distance function $\mathcal{D}$. We define $\ucb_t(i, 0) = 0$ for all $i \in [N]$ due to the assumption that zero effort yields zero reward (Fig.~\ref{fig:ucb}b). To address the monotonically non-decreasing reward across effort space, we constrain the first term of 
$\textit{dist}$ to be nonnegative (Fig.~\ref{fig:ucb}c). 



\subsection{Super Arm Selection}
With the computed UCBs, the selection of super arms (patrol strategies) can be framed as a knapsack optimization problem. We aim to maximize the value of our sack (sum of the UCBs) subject to a budget constraint (total effort). To solve this problem, we use the following integer linear program~$\mathcal{P}$.
\begin{align*}
    \max_z \quad & \sum\nolimits_{i \in [N]} \sum\nolimits_{j \in [J]} z_{i, j} \cdot \ucb_t (i, j) \tag{$\mathcal{P}$} \\
    \text{s.t.} \quad 
    & z_{i, j} \in \{0, 1\} \qquad \qquad \qquad \forall i \in [N], j \in [J] \\
    & \sum\nolimits_{j \in [J]}  z_{i, j} = 1 \qquad \qquad \forall i \in [N] \\
    & \sum\nolimits_{i \in [N]} \sum\nolimits_{j \in [J]} z_{i, j} \discretizationlevel{j} \leq B
\end{align*}
There is one auxiliary variable $z_{i, j}$, constrained to be binary, for each level of patrol effort for each target. Setting $z_{i,j}=1$ means we exert $\discretizationlevel{j}$ effort on target~$i$. The second constraint sets $\effort{i}$ by requiring that we pull one arm per target (which may be the zero effort arm $\effort{i} = 0$). The third constraint mandates that we stay within budget. This integer program has $NJ$ variables and $N+1$ constraints. 
Pseudocode for the LIZARD algorithm is given in Algorithm~\ref{alg:decomposed}. 



\subsection{Incorporating Historical Data}
\label{sec:historical-data}
We incorporate historical data to further improve bounds without compromising the regret guarantee (see Sec.~\ref{sec:confidence_bounds}). \citet{shivaswamy2012multi} show that, in the infinite horizon case, a logarithmic amount of historical data can reduce regret from logarithmic to constant. However, care must be taken to consider short-term effects. Standard methods that include historical arm pulls as if they occurred online can result in poor short-term performance, as in the following example:
\begin{example}
Let $A$ and $B$ be two arms with noise-free rewards $\mu(A) = 1$ and $\mu(B) = 0$. Suppose that in the historical data, $A$ is pulled $H \gg 1$ times and $B$ has never been pulled. If those historical arm pulls are counted in the calculation of confidence radii, then we must pull the bad arm $B$ first $s \geq O(H / \log H)$ times until the UCB of arm $B$ is smaller than the UCB of arm $A$:
\begin{align*}
& \sqrt{\frac{3 \log{s}}{2s}} = 0 + r_s(B) \leq 1 + r_s(A) = 1 + \sqrt{\frac{3 \log{s}}{2H}}
\end{align*}
yielding linear regret $O(s) = O \left( \frac{H}{\log H} \right)$ in the first $s$~rounds.
\end{example}
Intuitively, the more times the optimal arm is pulled in the historical data, the greater the short-term regret. Because we expect the historical data to oversample good arms (as patrollers prefer visiting areas that are frequently attacked), we design LIZARD to avoid this problem by ignoring the number of historical arm pulls when computing the confidence radii. Specifically, LIZARD initializes the observed rewards $\reward_t(\cdot, \cdot)$ and the number of pulls $n_t(\cdot, \cdot)$ with the historical observations, but, when computing the confidence radii $r_t(i, j)$, LIZARD only considers the number of pulls from online play.




\section{Regret Analysis}


We provide a regret bound for Algorithm~\ref{alg:decomposed} with fixed discretization (Sec.~\ref{sec:fixed_discretization}), which is useful in practice but cannot achieve theoretical no-regret due to the discretization factor. 
Thus, we then offer Algorithm~\ref{alg:adaptive_decomposed} with adaptive discretization to achieve no-regret (Sec.~\ref{sec:adaptive_discretization}), showing that there is no barrier to achieving no regret in practice other than the need for fixed discretization in operationalizing our algorithm in the field. 
Our regret bound improves upon that of the zooming algorithm of \citet{kleinberg2019bandits} for all dimensions $d > 1$. In our problem formulation, each dimension of the continuous action space represents a target, so $d = N$. In fact, the regret bound for the zooming algorithm is a provable lower bound; we are able to improve this lower bound through decomposition (Sec.~\ref{sec:confidence_bounds}). Furthermore, we extend the line of research on combinatorial bandits, generalizing the CUCB algorithm from \citet{chen2016combinatorial} to continuous spaces.

Full proofs are included in the appendix.


\subsection{Fixed Discretization}\label{sec:fixed_discretization}


\begin{restatable}[]{theorem}{fixedDiscretization}\label{thm:fixed_discretization}
Given the minimum discretization gap $\discretizationgap$, number of arms~$N$, Lipschitz constant~$L$, and time horizon~$T$, the regret bound of Algorithm~\ref{alg:decomposed} with $\selfucb$ is
\begin{align}\label{eqn:fixed_regret_bound}
\resizebox{.85\hsize}{!}{%
$\text{Reg}_{\discretizationgap}(T) \leq O \left( N L\discretizationgap T + \sqrt{N^3 \discretizationgap^{-1} T \log{T}} + N^2 L\discretizationgap^{-1} \right)$
}.
\end{align}
\end{restatable}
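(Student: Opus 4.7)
The plan is to decompose $\text{Reg}_{\discretizationgap}(T)$ into a \emph{discretization gap} and the combinatorial UCB regret on the $NJ$ decomposed arms (with $J = O(1/\discretizationgap)$), and then bound the two pieces separately. Letting $\effortvec^*_{\discretizationgap}$ denote the best grid-feasible, budget-feasible effort vector, I would write
\[
\text{Reg}_{\discretizationgap}(T) \;=\; T\bigl[\mu(\opteffortvec) - \mu(\effortvec^*_{\discretizationgap})\bigr] \;+\; \Bigl[T\mu(\effortvec^*_{\discretizationgap}) - \sum_{t=1}^T \mu(\effortvec^{(t)})\Bigr].
\]

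For the first bracket, I would round each coordinate $\opteffort{i}$ down to its nearest grid point in $\discretization$, which preserves the budget constraint, and then invoke Lipschitz continuity of $\widetilde{\mu}$ in effort to bound the per-round gap by $NL\discretizationgap$. Multiplying by $T$ gives the $O(NL\discretizationgap T)$ term.

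For the second bracket, I would follow the CUCB template of Chen et al.~(2016), adapted to arms $(i,j) \in [N]\times[J]$. First, define the clean event on which $|\bar\mu_t(i,j) - \mu_i(\discretizationlevel{j})| \leq r_t(i,j)$ for every triple $(i,j,t)$; Hoeffding combined with a union bound over $NJT$ triples makes this event hold with probability at least $1 - O(1/T)$, so the off-event regret is absorbed into the constant term. Conditioned on the clean event, the ILP $\mathcal{P}$ returns $\effortvec^{(t)}$ whose $\ucb$-sum dominates that of $\effortvec^*_{\discretizationgap}$, so $\mu(\effortvec^*_{\discretizationgap}) \leq \sum_i \ucb_t(i,j_i^{(t)})$, which bounds the instantaneous regret by $2\sum_i r_t(i,j_i^{(t)})$. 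Summing over $t$ and applying Cauchy--Schwarz twice --- first inside each round across the $N$ components of the super arm, then across the $T$ rounds, together with the standard telescoping identities $\sum_{i,j}\sum_{s=1}^{n_T(i,j)} 1/s = O(NJ\log T)$ and $\sum_{i,j} n_T(i,j) = NT$ --- yields the middle $\sqrt{N^3 T \log T / \discretizationgap}$ term. The residual $O(N^2 L \discretizationgap^{-1})$ matches the standard CUCB burn-in constant of the form $K \cdot m \cdot \Delta_{\max}$: in our setting $K = N$ (arms per super arm), $m = N/\discretizationgap$ (base arms), and $\Delta_{\max} = O(L)$ by Lipschitz continuity on a bounded effort range.

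The hard part will be the combinatorial Cauchy--Schwarz step that extracts the correct exponent of $N$: a single naive application only recovers $\sqrt{N^2 T \log T / \discretizationgap}$, and the extra $\sqrt{N}$ requires first bundling the $N$ simultaneous arm-pulls per round before applying Cauchy--Schwarz over $t$, mirroring the Chen et al.\ combinatorial argument. Secondary care is needed (i) at the discretization step to preserve knapsack feasibility after rounding, and (ii) at the clean-event construction to union-bound only over the $NJ$ discretized arms rather than the continuous effort axis.
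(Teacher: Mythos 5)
Your overall decomposition---per-round discretization loss of $NL\discretizationgap$ obtained by rounding $\opteffortvec$ down onto the grid (which preserves budget feasibility), plus a selection-regret term against the best grid-feasible super arm---is exactly the paper's. Where you diverge is the selection term: the paper does not re-derive the combinatorial UCB analysis at all. It verifies that the additive reward satisfies the bounded-smoothness condition of \citeauthor{chen2016combinatorial}~\shortcite{chen2016combinatorial} with $f(\Lambda) = N\Lambda$ (i.e.\ $\gamma = N$, $\omega = 1$), bounds the number of base arms by $m \leq N/\discretizationgap$ and $\maxregret \leq NL$, and then black-boxes their Theorem~2 to get $2N\sqrt{6NT\log T/\discretizationgap} + (\pi^2/3+1)N^2L/\discretizationgap$. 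Your from-scratch clean-event argument is a legitimate alternative and, for an additive reward, is actually \emph{tighter}: summing the per-arm radii directly and applying Cauchy--Schwarz over the $N/\discretizationgap$ base arms with $\sum_{i,j} n_T(i,j) = NT$ gives $O(\sqrt{N^2\discretizationgap^{-1}T\log T})$, which implies the stated $O(\sqrt{N^3\discretizationgap^{-1}T\log T})$ a fortiori. This means the step you flag as ``the hard part'' is a non-issue pointed in the wrong direction: you do not need to manufacture an extra $\sqrt{N}$ to match an upper bound---the $N^{3/2}$ in the theorem is an artifact of Chen et al.'s max-norm smoothness argument (instantaneous regret $\leq 2N\max_i r_t(i)$ rather than $\leq 2\sum_i r_t(i)$), and a smaller bound suffices.

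One genuine technical slip to fix: with $r_t(i,j) = \sqrt{3\log t/(2n_t(i,j))}$, a whole-horizon clean event does \emph{not} hold with probability $1 - O(1/T)$; the union bound over arms, rounds, and pull counts gives total failure mass $\sum_t 2mt^{-2} = O(m)$, which is not small. The correct bookkeeping (as in the paper's Lemma~5) is per round: $\Pr[\neg E_t] \leq 2mt^{-2}$, each bad round costs at most $\maxregret$, and summing gives the $\frac{\pi^2}{3}m\maxregret = O(N^2L\discretizationgap^{-1})$ contribution to the constant term. With that repair, and noting that initial pulls of each of the $m$ arms also land in the same constant term, your argument goes through.
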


\begin{proof}[Proof sketch]
The regret in Theorem~\ref{thm:fixed_discretization} comes from (i) discretization regret in the first term of Eq.~\ref{eqn:fixed_regret_bound} and (ii)~suboptimal arm selections in the last two terms.
The discretization regret is due to inaccurate approximation caused by discretization, where the error can be bounded by rounding the optimal arm selection and fractional effort levels to their closest discretized levels.
The suboptimal arm selections are due to insufficient samples across all discretized subarms 
and have sublinear regret in terms of $T$.
\end{proof}


Under a finite horizon, it is sufficient to pick a discretization level based on the time horizon. For example, given a finite horizon~$T$, we can pick a discretization that is optimal relative to $T$. We want to use a finer discretization to minimize the discretization regret, while at the same time minimizing the number of effort levels to explore.
Thus, we trade off between the selection regret and discretization regret: we want them to be of the same order as the regret bound in Theorem~\ref{thm:fixed_discretization}, i.e., $O (\sqrt{N^3\discretizationgap^{-1} T \log{T}}) = O(NL \discretizationgap T)$, or equivalently $\discretizationgap = (\log{T} / {T})^\frac{1}{3}N^\frac{1}{3} L^{-\frac{2}{3}}$.
This result matches the intuition that we should use a finer discretization when either (i)~the total timesteps is larger, (ii)~the number of targets is smaller, or (iii)~the Lipschitz constant is larger (i.e., function values change more drastically). 

With Theorem~\ref{thm:fixed_discretization} we observe that the barrier to achieving no-regret is the discretization error which is linear in all terms, which brings us to adaptive discretization. 



\subsection{Adaptive Discretization}
\label{sec:adaptive_discretization}


To achieve no-regret with an infinite time horizon, we need adaptive discretization. 
Adaptive discretization is less practical on the ground, but would be useful in other bandit settings where we could more precisely spend our budget such as influence maximization and facility location.
As shown in Algorithm~\ref{alg:adaptive_decomposed}, we begin with a coarse patrol strategy, beginning with binary decisions on whether or not to visit each target, then gradually progress to a finer discretization. 

\begin{algorithm}[t]
\caption{Adaptively Discretized LIZARD}
\label{alg:adaptive_decomposed}
\textbf{Inputs:} Number of targets $N$, time horizon~$T$, budget~$B$, target features~$\featurevec{i}$, Lipschitz constant $L$ \\
Discretization levels~$\discretization = \{0,1\}$ 
, gap $\discretizationgap = 1$ \\
$T_k = \frac{N 2^{3k}}{L^2} \log{\frac{N 2^{3k}}{L^2}} ~\forall k \in \N \cup \{ 0 \}$\\
$n(i, \psi_j) = 0$, $\reward(i, \psi_j) = 0$ $\forall i \in [N], j \in [J]$ \\
\For{$t = 1, 2, \ldots, T$} {
\If{$t > \sum\nolimits_{j=0}^{k-1} T_j$}{Set $\discretizationgap=2^{-k}$ and $\discretization = \{0, \discretizationgap, ..., 1\}$}
Compute $\ucb_t$ using Eq.~\ref{eqn:tighter_ucb} \\
Solve $\mathcal{P}(\ucb_t, B, N, T)$ to select super arm $\effortvec$ \\
Observe rewards $X_1^{(t)}, X_2^{(t)}, \ldots, X_n^{(t)}$ \\
\For{$i = 1, 2, \ldots, N$} {
$\reward(i, \effort{i}) = \reward(i, \effort{i}) + \Xit$ \\
$n(i, \effort{i}) = n(i, \effort{i}) + 1$
}
}
\end{algorithm}

\begin{restatable}[]{theorem}{adaptiveDiscretization}\label{thm:adaptive_discretization}
Given the number of arms~$N$, Lipschitz constant~$L$, and time horizon~$T$, the regret bound of Algorithm~\ref{alg:adaptive_decomposed} with $\selfucb$ is be given by
\begin{align}\label{eqn:adaptive_regret_bound}
    \text{Reg}(T) & \leq O \left( L^{\frac{4}{3}} N T^{\frac{2}{3}} (\log{T})^{\frac{1}{3}} \right) \ .
\end{align}
\end{restatable}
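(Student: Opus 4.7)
The plan is to decompose the time horizon into doubling phases indexed by $k \geq 0$, where phase $k$ runs with the fixed discretization grid $\discretization_k = \{0, 2^{-k}, \ldots, 1\}$ of gap $\discretizationgap_k = 2^{-k}$ for a prescribed number of rounds $T_k$. Within each phase, Algorithm~\ref{alg:adaptive_decomposed} behaves identically to Algorithm~\ref{alg:decomposed} at fixed discretization $\discretizationgap_k$, so Theorem~\ref{thm:fixed_discretization} applies and bounds the phase-$k$ regret by
\begin{align*}
O\left(N L \discretizationgap_k T_k + \sqrt{N^3 \discretizationgap_k^{-1} T_k \log T_k} + N^2 L \discretizationgap_k^{-1}\right).
\end{align*}

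Next I would balance the discretization and selection terms within each phase by choosing $T_k$ so that $N L \discretizationgap_k T_k \asymp \sqrt{N^3 \discretizationgap_k^{-1} T_k \log T_k}$. This yields $T_k = \Theta(N L^{-2} \discretizationgap_k^{-3} \log T) = \Theta(N L^{-2} 2^{3k} \log T)$, so phase lengths grow geometrically in $k$. The last phase $K$ is pinned by the horizon constraint $\sum_{j \leq K} T_j = T$; because the lengths grow exponentially, the sum is dominated by its final term, giving $2^K = \Theta\bigl((L^2 T / (N \log T))^{1/3}\bigr)$ and correspondingly $\discretizationgap_K = \Theta\bigl((N \log T / (L^2 T))^{1/3}\bigr)$.

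To obtain the cumulative bound, I would sum per-phase regret across all phases. Since per-phase regret inherits the same geometric growth as the phase lengths, the total is a constant factor times the regret in the final phase, which evaluates to $O(N L \discretizationgap_K T) = \tilde O( N \cdot (N/L^2)^{1/3} \cdot T^{2/3} )$, matching the form of the stated bound once the constants are absorbed. I would separately argue that the bootstrapping phases with $\discretizationgap_k = \Theta(1)$ and $T_k = \Theta(\log T)$ contribute only $\tilde O(\log T)$ regret, and that the lower-order term $N^2 L \discretizationgap_k^{-1}$ summed over the $K = \Theta(\log T)$ phases is dominated by a geometric series whose largest term $N^2 L \discretizationgap_K^{-1}$ is still $o(T^{2/3})$ in the regime $T \gg N, L$.

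The main technical obstacle will be carefully handling the transition between phases: observations collected under a coarser grid must remain valid at finer grids. Here the key observation is that $\discretization_{k+1}$ refines $\discretization_k$, so each coarse arm $(i,\discretizationlevel{j})$ is still a legal arm in phase $k+1$, and its pulls continue to produce unbiased estimates at the old discretization level; Lipschitz continuity via Eq.~\ref{eqn:tighter_ucb} then propagates these estimates to the newly inserted effort levels. The subtle point is that Theorem~\ref{thm:fixed_discretization} was proved assuming a clean-slate $\selfucb$ at the start of the phase, so I would have to verify (mirroring the historical-data argument of Sec.~\ref{sec:historical-data}) that keeping the cumulative counts $n(i,\discretizationlevel{j})$ across phases only tightens confidence radii and hence can only improve the per-phase regret bound, so Theorem~\ref{thm:fixed_discretization} still upper-bounds it.
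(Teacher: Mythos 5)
Your proposal follows essentially the same route as the paper's proof: halving the discretization gap in phases, choosing each phase length $T_k = \Theta(N L^{-2} 2^{3k}\log(\cdot))$ to balance the discretization and selection terms of Theorem~\ref{thm:fixed_discretization}, observing that the phase regrets grow geometrically so the sum is dominated by the final phase, and solving the horizon constraint for $2^{K}$ to substitute back. Your added care about carrying observation counts across refining grids (which only tightens the confidence radii) is a point the paper glosses over, but it does not change the argument; the dominant term you obtain, $N L \discretizationgap_K T = \Theta\bigl(N^{4/3} L^{1/3} T^{2/3} (\log T)^{1/3}\bigr)$, matches what the paper's own calculation actually yields.
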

\begin{proof}[Proof sketch]
The regret bound in Theorem~\ref{thm:fixed_discretization} is not sublinear due to the additional discretization error. An intuitive way to alleviate this error is to adaptively reduce the discretization gap. We run each discretization gap $\discretizationgap$ for $T_\discretizationgap = \frac{N}{L^2 \discretizationgap^3} \log{\frac{N}{L^2 \discretizationgap^3}}$ time steps to make the discretization error and the selection error of the same order. We then start over with a finer discretization $\discretizationgap/2$ to make the discretization error smaller. After summing the regret from all different phrases of discretization, we achieve sublinear regret as shown in Eq.~\ref{eqn:adaptive_regret_bound}.
\end{proof}

Under reasonable problem settings, $T$ dominates all other variables, so the regret in Theorem~\ref{thm:adaptive_discretization} is effectively of order $O(T^{\frac{2}{3}} (\log{T})^{\frac{1}{3}})$.
Our regret bound matches the bound of the zooming algorithm with covering dimension $d=1$.
Our setting is instead $N$-dimensional, falling into a space with covering dimension $d=N$. The regret bound for a metric space with covering dimension~$d$ is $O (T^{\frac{d+1}{d+2}} (\log{T})^{\frac{1}{d+2}})$, which approaches $\tilde{O} (T)$ as $d$ approaches infinity \cite{kleinberg2008multi}. More generally, our regret bound improves upon that of the zooming algorithm for any $d > 1$. 

Theorem~\ref{thm:adaptive_discretization} signifies that LIZARD can successfully decouple the $N$-dimensional metric space into individual sub-dimensions while maintaining the smaller regret order, showcasing the power of decomposibility.




\subsection{Tightening Confidence Bounds}
\label{sec:confidence_bounds}
We have so far offered regret bounds to account for decomposability and Lipschitz-continuity across effort space. We now guarantee that the regret bounds continue to hold with Lipschitz-continuity in feature space, monotonicity, and historical information. 
We first look at how prior knowledge affects the regret bound in the combinatorial bandit setting:
\begin{restatable}[]{theorem}{lipschitzCUCB}\label{thm:lipschitz_cucb}
Consider a combinatorial multi-arm bandit problem. If the bounded smoothness function given is $f(x) = \gamma x^\omega$ for some $\gamma > 0, \omega \in (0,1]$ and the Lipschitz upper confidence bound is applied to all $m$ base arms, the cumulative regret at time~$T$ is bounded by
\begin{align*}
\resizebox{.95\hsize}{!}{%
$\text{Reg}(T) \leq \frac{2 \gamma}{2 - \omega} (6 m \log{T})^{\frac{\omega}{2}} \cdot T^{1 - \frac{\omega}{2}} + \left(\frac{\pi^2}{3} + 1 \right) m \maxregret$
}
\end{align*}
where $\maxregret$ is the maximum regret achievable.
\end{restatable}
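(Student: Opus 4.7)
The plan is to follow the CUCB proof of \citeauthor{chen2016combinatorial}~\shortcite{chen2016combinatorial} essentially verbatim, since the Lipschitz-tightened UCB of Eq.~\ref{eqn:tighter_ucb} only shrinks each confidence bound without breaking the two properties on which their analysis rests: $\ucb_t(i,j)$ must remain a valid upper bound on $\mu_i(\discretizationlevel{j})$ with high probability, and it must still satisfy $\ucb_t(i,j) \leq \mu_i(\discretizationlevel{j}) + 2 r_t(i,j)$ under the clean event.

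First I would verify validity of the tightened UCB. Define the clean event $\mathcal{E}_t$ that $|\bar{\mu}_t(u,v) - \mu_u(\discretizationlevel{v})| \leq r_t(u,v)$ for every base arm $(u,v)$. By Hoeffding's inequality and a union bound over the $m$ base arms, $\Pr[\neg\mathcal{E}_t] \leq 2m/t^3$. Under $\mathcal{E}_t$, applying Lipschitz continuity for every $(u,v)$ and then taking the minimum yields $\mu_i(\discretizationlevel{j}) \leq \selfucb_t(u,v) + L\cdot\textit{dist} \leq \ucb_t(i,j)$, as needed. Summing the failure probabilities across rounds bounds the total bad-event regret by $(\pi^2/3)\, m \maxregret$.

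Second, choosing $(u,v)=(i,j)$ in the minimum of Eq.~\ref{eqn:tighter_ucb} gives $\ucb_t(i,j) \leq \selfucb_t(i,j)$, so under $\mathcal{E}_t$ the one-sided bound $\ucb_t(i,j) \leq \mu_i(\discretizationlevel{j}) + 2 r_t(i,j)$ still holds. Because the oracle $\mathcal{P}$ selects the super arm $\effortvec^{(t)}$ maximizing the sum of UCBs, and $\ucb_t$ upper-bounds $\mu$ arm-wise under $\mathcal{E}_t$, the chain $r(\ucb_t|_{\effortvec^{(t)}}) \geq r(\ucb_t|_{\opteffortvec}) \geq r(\mu|_{\opteffortvec})$ holds. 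The bounded smoothness $f(x)=\gamma x^\omega$ then converts this chain into a per-round regret of at most $\gamma\bigl(2\max_i r_t(i, j_i^{(t)})\bigr)^\omega$, where $j_i^{(t)}$ indexes the effort level played at target $i$. Summing over $t$ with the standard pigeonhole-plus-Jensen trick---bound $\max$ by the sum over played base arms, regroup by base arm, invoke $\sum_{n=1}^{N} n^{-\omega/2} \leq \tfrac{2}{2-\omega} N^{1-\omega/2}$, and apply Jensen under $\sum_i N_i(T) \leq mT$---delivers the dominant term $\tfrac{2\gamma}{2-\omega}(6m\log T)^{\omega/2} T^{1-\omega/2}$. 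An extra $m\maxregret$ to absorb one initial exploration per arm combines with the $(\pi^2/3)\,m\maxregret$ clean-event failure to yield the stated $(\pi^2/3+1)\, m\maxregret$.

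The main obstacle is not any new technical difficulty but bookkeeping: verifying that every link in the CUCB chain survives the $\min$-tightening. The potentially delicate point is that $\ucb_t(i,j)$ is defined in terms of every other arm's observations, which could in principle correlate confidence radii across base arms and jeopardize the pigeonhole step. The resolution is that the bound $\ucb_t(i,j) \leq \selfucb_t(i,j)$ is entirely local to arm $(i,j)$, so the per-arm counting of pulls driving the $\sum_n n^{-\omega/2}$ calculation carries through unchanged.
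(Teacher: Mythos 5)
Your overall strategy is the same as the paper's: show that the min-tightened $\ucb$ of Eq.~\ref{eqn:tighter_ucb} preserves the two properties the CUCB analysis needs --- validity as a high-probability upper bound (via Lipschitz continuity applied to every arm's $\selfucb$ before taking the minimum) and the two-sided closeness $\ucb_t(i,j) \leq \selfucb_t(i,j) \leq \mu_i(\discretizationlevel{j}) + 2r_t(i,j)$ (via the $(u,v)=(i,j)$ term in the minimum) --- and then let the rest of the argument of \citeauthor{chen2016combinatorial} go through. The paper does exactly this, first proving a coarser lemma that attributes $\maxregret$ to every bad round and then citing the finer regret attribution to obtain the $f(x)=\gamma x^\omega$ term.

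Two points in your recitation of the CUCB machinery need repair. First, the clean event concerns $\bar{X}_{i,T_{i,t-1}}$ where the pull count $T_{i,t-1}$ is random, so Hoeffding must be unioned over the possible sample counts $s\in[t]$ as well as over the $m$ arms; this gives $\Pr[\neg\mathcal{E}_t]\le 2mt^{-2}$ (whose sum over $t$ is exactly $(\pi^2/3)m$), not $2m/t^3$. Second, and more substantively, bounding $\left(\max_{i\in S_t} r_t(i)\right)^\omega$ by the sum of $r_t(i)^\omega$ over all played base arms and then applying Jensen under $\sum_i N_i(T)\le mT$ yields a dominant term of order $(6m\log T)^{\omega/2}(mT)^{1-\omega/2}$, i.e., an extra factor of $m^{1-\omega/2}$ relative to the stated bound (a factor of $\sqrt{m}$ in the $\omega=1$ case that the paper actually uses downstream, which would degrade Theorems~\ref{thm:fixed_discretization} and~\ref{thm:lipschitz_regret_bound}). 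To recover $T^{1-\omega/2}$ you must charge each bad round only to its least-sampled arm $i_t^{*}=\arg\max_{i\in S_t} r_t(i)$, so that the total number of charges across all arms is at most $T$ before Jensen is applied. With that correction your argument matches the intended proof.
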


Theorem~\ref{thm:lipschitz_cucb} matches the regret of the CUCB algorithm \cite{chen2016combinatorial}, generalizing combinatorial bandits to continuous spaces.
Theorem~\ref{thm:lipschitz_cucb} also allows us to generalize Theorems \ref{thm:fixed_discretization} and~\ref{thm:adaptive_discretization} to our setting with a tighter $\textsc{UCB}$ from Lipschitz-continuity, which yields Theorem~\ref{thm:lipschitz_regret_bound}:

\begin{restatable}[]{theorem}{lipschitzRegretBound}\label{thm:lipschitz_regret_bound}
Given the minimum discretization gap $\discretizationgap$, number of targets $N$, Lipschitz constant $L$, and time horizon $T$, the regret bound of Algorithm~\ref{alg:decomposed} with $\ucb$ is
\begin{align}\label{eqn:lipschitz_fixed_regret_bound}
\resizebox{.9\hsize}{!}{%
$\text{Reg}_{\discretizationgap}(T) \leq O \left( N L\discretizationgap T + \sqrt{N^3 \discretizationgap^{-1} T \log{T}} + N^2 L\discretizationgap^{-1} \right)$
}
\end{align}
and the regret bound of Algorithm~\ref{alg:adaptive_decomposed} with $\ucb$ is
\begin{align}\label{eqn:lipschitz_adaptive_regret_bound}
    \text{Reg}(T) \leq O\left( L^{\frac{4}{3}} N T^{\frac{2}{3}} (\log{T})^{\frac{1}{3}} \right) \ .
\end{align}
\end{restatable}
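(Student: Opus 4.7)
The plan is to show that the tighter $\textsc{UCB}$ defined in Eq.~\ref{eqn:tighter_ucb} is simultaneously a valid upper confidence bound on each $\mu_i(\discretizationlevel{j})$ and pointwise no larger than $\selfucb$. These two properties together let us inherit the regret bounds of Theorems~\ref{thm:fixed_discretization} and~\ref{thm:adaptive_discretization}, with Theorem~\ref{thm:lipschitz_cucb} supplying the combinatorial piece.

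First, I would establish validity. Fix any arms $(i,j)$ and $(u,v)$. By Lipschitz-continuity in features and effort (with monotonicity and $\mu_i(0)=0$ only tightening the bound further), we have $\mu_i(\discretizationlevel{j}) \leq \mu_u(\discretizationlevel{v}) + L \cdot \textit{dist}$ where $\textit{dist}$ is as in Eq.~\ref{eqn:tighter_ucb}. A standard Hoeffding argument for $\selfucb$ with radius $\sqrt{3\log(t)/(2 n_t(u,v))}$ yields $\mu_u(\discretizationlevel{v}) \leq \selfucb_t(u,v)$ with failure probability at most $2/t^3$. A union bound over all $NJ$ base arms, as in the CUCB analysis, keeps the total failure probability summable. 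Conditional on this good event, $\mu_i(\discretizationlevel{j}) \leq \selfucb_t(u,v) + L \cdot \textit{dist}$ for every $(u,v)$, so taking the minimum preserves the inequality and $\mu_i(\discretizationlevel{j}) \leq \ucb_t(i,j)$.

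Second, setting $(u,v)=(i,j)$ in the min gives $\textit{dist}=0$, hence $\ucb_t(i,j) \leq \selfucb_t(i,j)$. Consequently the super arm chosen by $\mathcal{P}(\ucb_t,\ldots)$ has sum of per-arm bounds no larger than the one chosen under $\selfucb$, while still upper-bounding the true composite mean. This lets me replay the CUCB-style decomposition of Theorem~\ref{thm:lipschitz_cucb} verbatim: the per-round regret is controlled by $\sum_i (\ucb_t(i,\effort{i}^{(t)}) - \mu_i(\effort{i}^{(t)}))$, which is at most the corresponding $\selfucb$ sum, yielding the same $\sqrt{N^3 \discretizationgap^{-1} T \log T}$ selection regret and $N^2 L \discretizationgap^{-1}$ lower-order term. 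The discretization regret analysis is orthogonal to the choice of confidence bound: rounding $\opteffortvec$ to its nearest grid point and applying Lipschitz-continuity in $\effortvec$ contributes the $N L \discretizationgap T$ term in Eq.~\ref{eqn:lipschitz_fixed_regret_bound}. Plugging this into the phased doubling schedule $T_k = \frac{N}{L^2 2^{3k}} \log \frac{N}{L^2 2^{3k}}$ of Algorithm~\ref{alg:adaptive_decomposed} and summing geometric contributions as in the proof of Theorem~\ref{thm:adaptive_discretization} recovers Eq.~\ref{eqn:lipschitz_adaptive_regret_bound}.

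The main obstacle is the union bound over all $(u,v)$ used to validate the minimum; I expect this to be absorbed without penalty because the constant $3$ inside $\sqrt{3\log(t)}$ in Eq.~\ref{eq:conf} already tolerates a union bound over the $NJ$ base arms across all rounds, so replacing $\selfucb_t(i,j)$ by $\min_{u,v}\{\selfucb_t(u,v)+L\cdot\textit{dist}\}$ contributes only constant factors. A secondary technical subtlety is verifying that the monotonicity clamp ($\max\{0,\discretizationlevel{v}-\discretizationlevel{j}\}$) and the zero-effort pin ($\ucb_t(i,0)=0$) preserve the upper-bound property; these follow directly from $\mu_i$ being nondecreasing and $\mu_i(0)=0$, so validity is not compromised. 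With these pieces in place, both inequalities in Theorem~\ref{thm:lipschitz_regret_bound} fall out of the corresponding arguments for Theorems~\ref{thm:fixed_discretization} and~\ref{thm:adaptive_discretization}.
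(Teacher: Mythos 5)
Your proposal is correct and follows essentially the same route as the paper: the appendix's Lemma on tightened confidence bounds establishes exactly your two key facts (that $\ucb_t \le \selfucb_t$ pointwise and that $\ucb_t$ remains a valid upper bound on the good event from the Hoeffding/union-bound argument), after which the CUCB-style selection analysis and the discretization/phased-doubling computations of Theorems~\ref{thm:fixed_discretization} and~\ref{thm:adaptive_discretization} are replayed verbatim. The only point you treat slightly more carefully than the paper is the verification that the monotonicity clamp and zero-effort pin preserve validity, which the paper asserts without detail.
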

Finally, when historical data is used, we can treat all the historical arm pulls as previous arm pulls with regret bounded by the maximum regret $\maxregret$.\footnote{Note that LIZARD treats all historical pulls as a single arm pull with return equal to the average return to avoid short-term losses (Sec.~\ref{sec:historical-data}). However, the regret bound covers the broader case of using \emph{any} finite number of historical arm pulls.}
This yields a regret bound with a time-independent constant and is thus still sublinear, achieving no-regret. Taken together, these capture all the properties of the LIZARD algorithm, so we can state:
\begin{restatable}[]{corollary}{lipschitzRegret}
The regret bounds of Algorithm~\ref{alg:decomposed} and Algorithm~\ref{alg:adaptive_decomposed} still hold with the inclusion of decomposability, Lipschitz-continuity, monotonicity, and historical data.
\end{restatable}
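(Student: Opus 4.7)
The plan is to reduce the corollary to the three regret theorems already proved, then verify that each of the two remaining refinements—monotonicity and historical data—preserves the validity of the upper confidence bounds that underlie those theorems. At the top level, the structure of the argument is compositional: decomposability and Lipschitz-continuity across both effort and features have already been absorbed into $\ucb_t$ by Theorem~\ref{thm:lipschitz_regret_bound}, which yields exactly the bounds in Eqs.~\ref{eqn:lipschitz_fixed_regret_bound} and~\ref{eqn:lipschitz_adaptive_regret_bound}. So the only new work is to show that adding monotonicity and historical data does not break those bounds.

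For monotonicity and the boundary condition $\mu_i(0)=0$, I would observe that the modifications made in Eq.~\ref{eqn:tighter_ucb}—clipping the effort-difference term $\discretizationlevel{v}-\discretizationlevel{j}$ to be nonnegative, and hard-coding $\ucb_t(i,0)=0$—can only \emph{shrink} the UCB relative to the unconstrained Lipschitz UCB of Theorem~\ref{thm:lipschitz_regret_bound}. Because every true reward function $\mu_i$ obeying monotonicity and $\mu_i(0)=0$ automatically satisfies both inequalities, each tightened UCB is still a valid upper bound on the true mean with the same high probability. The concentration events underlying the proofs of Theorems~\ref{thm:fixed_discretization}--\ref{thm:lipschitz_regret_bound} therefore continue to hold, and the regret decomposition into ``selection error'' plus ``discretization error'' is unaffected.

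For historical data, I would use the device already suggested in the footnote: treat each of the $H$ prior pulls as if it were an initial online pull. The regret contributed by those pulls is at most $H \cdot \maxregret$, a finite additive constant independent of $T$, which is absorbed into the $O(\cdot)$ bounds in Eqs.~\ref{eqn:lipschitz_fixed_regret_bound} and~\ref{eqn:lipschitz_adaptive_regret_bound}. The main obstacle is that LIZARD, as implemented, updates $\reward_t$ and $n_t$ with historical observations but excludes those observations from the confidence radius $r_t$. One must check that the resulting UCB remains a valid upper confidence bound. This follows from noting that the empirical mean computed from $H+n$ samples concentrates at rate $\sqrt{\log t / (H+n)}$, which is no larger than the rate $\sqrt{\log t / n}$ actually used; consequently $\ucb_t(i,j) \ge \mu_i(\effort{i})$ w.h.p.\ by the same Hoeffding union bound as before, merely with slack. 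Thus the radius is over-conservative but never invalid, exploration is preserved, and the asymptotic bounds of Theorem~\ref{thm:lipschitz_regret_bound} carry through, giving the corollary.
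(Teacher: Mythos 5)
Your proposal is correct and follows essentially the same route as the paper: reduce to Theorem~\ref{thm:lipschitz_regret_bound} for decomposability and Lipschitz-continuity, observe that the monotonicity and zero-effort refinements only add further \emph{valid} upper bounds so that their minimum remains a valid UCB with the same concentration guarantee (this is exactly the mechanism in Lemma~\ref{thm:confidence_bounds}), and absorb historical pulls into a time-independent additive constant bounded by $\maxregret$ per pull. Your additional check that the online-only confidence radius is merely over-conservative (never invalid) when the empirical mean incorporates historical samples is a detail the paper glosses over, but it is consistent with, not a departure from, its argument.
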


Theorem~\ref{thm:lipschitz_regret_bound} highlights the interplay between Lipschitz-continuity and decomposition.
The zooming algorithm achieves the provable lower bound on regret  $\tilde{O}(T^{\frac{d+1}{d+2}})$ \cite{kleinberg2004nearly}. \emph{The regret that we achieve improves upon that lower bound for all $d > 1$, which is only possible due to the addition of decomposition.} 






\section{Experiments}
\label{sec:experiments}

\begin{figure}
  \centering
  \begin{subfigure}[t]{.26\columnwidth}
  \centering
  \includegraphics[width=\columnwidth]{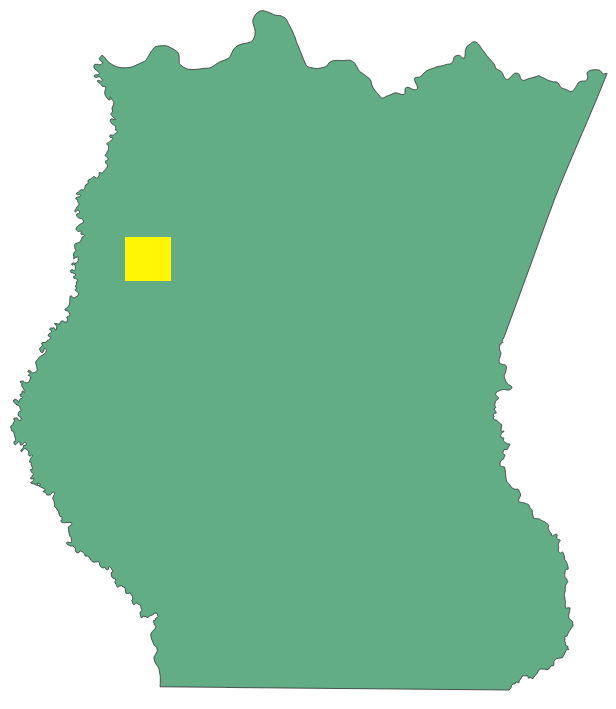}
  \end{subfigure}
  ~
  \begin{subfigure}[t]{.65\columnwidth}
  \centering
  \begin{tikzpicture}
\pgfplotsset{
  width=\linewidth,
  height=0.6\linewidth,
  xtick pos=left,
  ytick pos=left,
  tick label style={font=\scriptsize},
  xmin=0,
  xmax=1,
  ymin=0,
}

\begin{axis}[
  xlabel={\scriptsize {Patrol effort $\beta_i$}},
  ylabel={\scriptsize {Avg reward $\mu_i(\beta_i)$}},
  xlabel near ticks,
  ylabel near ticks,
  xtick={0,.5,1},
]
\foreach \y in {0, ..., 24}{
    \addplot [blue, mark=none, thick, draw opacity=.3] table [x=x, y=\y, col sep=comma] {data/real-world_reward.csv};
}
\end{axis}
\end{tikzpicture}
  \end{subfigure} 
  \caption{Map of Srepok with a $5 \times 5$ km region highlighted and the real-world reward functions of the corresponding 25 targets.}
  \label{fig:sws-reward}
\end{figure}

\begin{table*}[t]
\centering
\caption{Performance across multiple problem settings, throughout which LIZARD achieves closest-to-optimal performance.}
\label{table:vary-parameters}
\setlength\tabcolsep{3.5pt} 
\resizebox{\textwidth}{!}{%
\begin{tabular}{r rrrrrrrr r rrrrrrrr}
\toprule
& \multicolumn{8}{c}{\textsc{Synthetic Data}} & \hspace{.3em} & \multicolumn{8}{c}{\textsc{Real-World Data}} \\
\cmidrule(lr){2-9} \cmidrule(lr){11-18}
\small{$(N, B) =$} & \multicolumn{2}{c}{\small{(25, 1)}} & \multicolumn{2}{c}{\small{(25, 5)}} & \multicolumn{2}{c}{\small{(100, 5)}} & \multicolumn{2}{c}{\small{(100, 10)}} && \multicolumn{2}{c}{\small{(25, 1)}} & \multicolumn{2}{c}{\small{(25, 5)}} & \multicolumn{2}{c}{\small{(100, 5)}} & \multicolumn{2}{c}{\small{(100, 10)}} \\
\cmidrule(lr){2-3} \cmidrule(lr){4-5} \cmidrule(lr){6-7} \cmidrule(lr){8-9} \cmidrule(lr){11-12} \cmidrule(lr){13-14} \cmidrule(lr){15-16} \cmidrule(lr){17-18}

\small{$T =$} & 
\small{200} & \small{500} & \small{200} & \small{500} & \small{200} & \small{500} & \small{200} & \small{500} &&
\small{200} & \small{500} & \small{200} & \small{500} & \small{200} & \small{500} & \small{200} & \small{500} \\ 

\midrule

\small{\textbf{LIZARD}} 
& $\bm{40.9}$ & $\bm{55.0}$ 
& $\bm{20.3}$ & $\bm{37.8}$ 
& $\bm{26.5}$ & $\bm{44.9}$ 
& $\bm{15.8}$ & $\bm{54.6}$ &

& $\bm{79.5}$ & $\bm{93.1}$ 
& $\bm{61.4}$ & $\bm{70.6}$ 
& $\bm{74.7}$ & $\bm{82.7}$ 
& $\bm{56.8}$ & $\bm{71.9}$ \\

\small{\textbf{CUCB}} 
& $5.7$ & $0.8$ 
& $15.7$ & $29.4$ 
& $-3.4$ & $-2.3$ 
& $-0.4$ & $12.1$ &

& $45.0$ & $57.1$ 
& $49.5$ & $68.3$ 
& $25.5$ & $50.9$ 
& $35.5$ & $49.4$ \\

\small{\textbf{MINION}} 
& $-42.7$ & $-46.4$ 
& $-60.0$ & $-35.9$ 
& $-1.1$ & $-53.3$ 
& $-9.1$ & $-21.3$ &

& $-13.1$ & $-39.3$ 
& $-28.7$ & $-10.1$ 
& $-18.4$ & $-14.4$ 
& $-18.6$ & $-13.7$ \\

\small{\textbf{Zooming}} 
& $-20.6$ & $-18.8$ 
& $-0.4$ & $-6.8$ 
& $-25.1$ & $-24.5$ 
& $-22.8$ & $-21.5$ &

& $-14.4$ & $-5.5$ 
& $40.5$ & $40.8$ 
& $-38.5$ & $-36.0$ 
& $-21.5$ & $-22.7$ \\
\bottomrule
\end{tabular}%
}
\end{table*}


We conduct experiments using both synthetic data and real poaching data. 
Our results validate that the addition of decomposition and Lipschitz-continuity not only improves our theoretical guarantees but also leads to stronger empirical performance. 
We show that LIZARD (Algorithm~\ref{alg:decomposed}) learns effectively within practical time horizons.

We consider a patrol planning problem with $N = 25$ or $100$ targets (each a 1~sq.~km grid cell), representing the region reachable from a single patrol post, and time horizon $T = 500$ representing a year and a half of patrols. 
The budget is the number of teams, e.g., $B=5$ corresponds to 5 teams of rangers. We use 50 timesteps of historical data, approximately two months of patrol, as we focus on achieving strong performance in parks with limited historical patrols.

\paragraph{Real-world data} We leverage patrol data from Srepok Wildlife Sanctuary in Cambodia to learn the reward function, as dependent on features and effort \cite{xu2020stay}. We train a machine learning classifier to predict whether rangers will detect poaching activity at each target for a given effort level. We then generate piecewise-linear approximations of the reward functions, shown in Fig.~\ref{fig:sws-reward}, which naturally intersects the origin and is nondecreasing with increased effort. 

\paragraph{Synthetic data} To produce synthetic data, we generate piecewise-linear functions that mimic the behavior of real-world reward functions. We define feature similarity as the maximum difference between the reward functions across all effort levels. 
We generate historical data with a biased distribution over the targets, corresponding to the realistic scenario where rangers spend more effort on targets that are easily accessible, such as those closest to a patrol post.

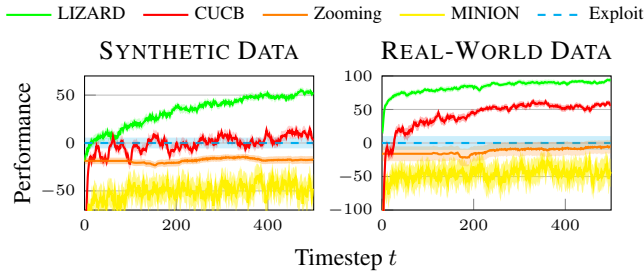
\begin{figure}
  \centering
  \begin{tikzpicture}
\pgfplotsset{
  width=0.55\linewidth,
  height=0.4\linewidth,
  xtick pos=left,
  ytick pos=left,
  tick label style={font=\tiny},
  ymajorgrids=true,
}

\begin{axis}[ 
  at={(0, 0)},
  title style={yshift=-.7ex},
  title={\textsc{Synthetic Data}},
  ylabel absolute, ylabel style={ yshift=-.4cm},
  ylabel={\footnotesize{Performance}},
  xlabel near ticks,
  legend style={
    align=left,
    at={(2.5, 1.6)},
    legend columns=5,
    font=\scriptsize, 
    draw=none,
    fill=none,
    /tikz/every even column/.append style={column sep=0.15cm}},
  ymin=-70,
  ymax=70,
  xmin=0,
  xmax=500,
]


\addplot [green, thick, mark=none] table [x=t, y=decomposed_avg, col sep=comma] {data/smooth_performance_synthetic_n25_b1.csv}; \addlegendentry{LIZARD}

\addplot [red, thick, mark=none] table [x=t, y=CUCB_avg, col sep=comma] {data/smooth_performance_synthetic_n25_b1.csv}; \addlegendentry{CUCB}

\addplot [orange, thick, mark=none] table [x=t, y=lipschitz_avg, col sep=comma] {data/smooth_performance_synthetic_n25_b1.csv}; \addlegendentry{Zooming}

\addplot [yellow, thick, mark=none] table [x=t, y=minion_avg, col sep=comma] {data/smooth_performance_synthetic_n25_b1.csv}; \addlegendentry{MINION}

\addplot [cyan, dashed, thick, mark=none] table [x=t, y=historical_exploit_avg, col sep=comma] {data/smooth_performance_synthetic_n25_b1.csv}; \addlegendentry{Exploit}


\addplot [draw=none, mark=none, name path=exploit_high] table [x=t, y expr=\thisrowno{3}+\thisrowno{4}, col sep=comma] {data/smooth_performance_synthetic_n25_b1.csv};
\addplot [draw=none, mark=none, name path=exploit_low] table [x=t, y expr=\thisrowno{3}-\thisrowno{4}, col sep=comma] {data/smooth_performance_synthetic_n25_b1.csv};
\tikzfillbetween[of=exploit_low and exploit_high, on layer=bg]{cyan, opacity=0.2};

\addplot [draw=none, mark=none, name path=minion_high] table [x=t, y expr=\thisrowno{11}+\thisrowno{12}, col sep=comma] {data/smooth_performance_synthetic_n25_b1.csv};
\addplot [draw=none, mark=none, name path=minion_low] table [x=t, y expr=\thisrowno{11}-\thisrowno{12}, col sep=comma] {data/smooth_performance_synthetic_n25_b1.csv};
\tikzfillbetween[of=minion_low and minion_high, on layer=main]{yellow, opacity=0.5};

\addplot [draw=none, mark=none, name path=zooming_high] table [x=t, y expr=\thisrowno{13}+\thisrowno{14}, col sep=comma] {data/smooth_performance_synthetic_n25_b1.csv};
\addplot [draw=none, mark=none, name path=zooming_low] table [x=t, y expr=\thisrowno{13}-\thisrowno{14}, col sep=comma] {data/smooth_performance_synthetic_n25_b1.csv};
\tikzfillbetween[of=zooming_low and zooming_high, on layer=main]{orange, opacity=0.3};

\addplot [draw=none, mark=none, name path=cucb_high] table [x=t, y expr=\thisrowno{9}+\thisrowno{10}, col sep=comma] {data/smooth_performance_synthetic_n25_b1.csv};
\addplot [draw=none, mark=none, name path=cucb_low] table [x=t, y expr=\thisrowno{9}-\thisrowno{10}, col sep=comma] {data/smooth_performance_synthetic_n25_b1.csv};
\tikzfillbetween[of=cucb_low and cucb_high, on layer=main]{red, opacity=0.2};

\addplot [draw=none, mark=none, name path=lizard_high] table [x=t, y expr=\thisrowno{5}+\thisrowno{6}, col sep=comma] {data/smooth_performance_synthetic_n25_b1.csv};
\addplot [draw=none, mark=none, name path=lizard_low] table [x=t, y expr=\thisrowno{5}-\thisrowno{6}, col sep=comma] {data/smooth_performance_synthetic_n25_b1.csv};
\tikzfillbetween[of=lizard_low and lizard_high, on layer=bg]{green, opacity=0.2};

\end{axis}

\begin{axis}[ 
  at={(0.47\linewidth, 0)},
  title={\textsc{Real-World Data}}, 
  title style={yshift=-.7ex},
  xlabel absolute, xlabel style={xshift=-2cm, yshift=.9ex},
  xlabel={\footnotesize{Timestep~$t$}},
  ymin=-100,
  ymax=100,
  xmin=0,
  xmax=500,
]


\addplot [draw=none, mark=none, name path=exploit_high] table [x=t, y expr=\thisrowno{3}+\thisrowno{4}, col sep=comma] {data/smooth_performance_real_n25_b1.csv};
\addplot [draw=none, mark=none, name path=exploit_low] table [x=t, y expr=\thisrowno{3}-\thisrowno{4}, col sep=comma] {data/smooth_performance_real_n25_b1.csv};
\tikzfillbetween[of=exploit_low and exploit_high, on layer=bg]{cyan, opacity=0.2};

\addplot [cyan, dashed, thick, mark=none] table [x=t, y=historical_exploit_avg, col sep=comma] {data/smooth_performance_real_n25_b1.csv}; 

\addplot [draw=none, mark=none, name path=minion_high] table [x=t, y expr=\thisrowno{11}+\thisrowno{12}, col sep=comma] {data/smooth_performance_real_n25_b1.csv};
\addplot [draw=none, mark=none, name path=minion_low] table [x=t, y expr=\thisrowno{11}-\thisrowno{12}, col sep=comma] {data/smooth_performance_real_n25_b1.csv};
\tikzfillbetween[of=minion_low and minion_high, on layer=main]{yellow, opacity=0.5};

\addplot [yellow, thick, mark=none] table [x=t, y=minion_avg, col sep=comma] {data/smooth_performance_real_n25_b1.csv}; 

\addplot [draw=none, mark=none, name path=zooming_high] table [x=t, y expr=\thisrowno{13}+\thisrowno{14}, col sep=comma] {data/smooth_performance_real_n25_b1.csv};
\addplot [draw=none, mark=none, name path=zooming_low] table [x=t, y expr=\thisrowno{13}-\thisrowno{14}, col sep=comma] {data/smooth_performance_real_n25_b1.csv};
\tikzfillbetween[of=zooming_low and zooming_high, on layer=main]{orange, opacity=0.2=3};

\addplot [orange, thick, mark=none] table [x=t, y=lipschitz_avg, col sep=comma] {data/smooth_performance_real_n25_b1.csv}; 

\addplot [draw=none, mark=none, name path=cucb_high] table [x=t, y expr=\thisrowno{9}+\thisrowno{10}, col sep=comma] {data/smooth_performance_real_n25_b1.csv};
\addplot [draw=none, mark=none, name path=cucb_low] table [x=t, y expr=\thisrowno{9}-\thisrowno{10}, col sep=comma] {data/smooth_performance_real_n25_b1.csv};
\tikzfillbetween[of=cucb_low and cucb_high, on layer=main]{red, opacity=0.3};

\addplot [red, thick, mark=none] table [x=t, y=CUCB_avg, col sep=comma] {data/smooth_performance_real_n25_b1.csv}; 

\addplot [draw=none, mark=none, name path=lizard_high] table [x=t, y expr=\thisrowno{5}+\thisrowno{6}, col sep=comma] {data/smooth_performance_real_n25_b1.csv};
\addplot [draw=none, mark=none, name path=lizard_low] table [x=t, y expr=\thisrowno{5}-\thisrowno{6}, col sep=comma] {data/smooth_performance_real_n25_b1.csv};
\tikzfillbetween[of=lizard_low and lizard_high, on layer=bg]{green, opacity=0.2};

\addplot [green, thick, mark=none] table [x=t, y=decomposed_avg, col sep=comma] {data/smooth_performance_real_n25_b1.csv}; 

\end{axis}

\end{tikzpicture}
  \caption{Performance, measured in terms of percentage of reward achieved between $\textsc{optimal} - \textsc{exploit}$, over time. Shaded region shows standard error. Setting shown is $N = 25$, $B=1$. LIZARD (green) performs best.} 
  \label{fig:performance}
\end{figure}

\paragraph{Algorithms}
We compare to three baselines: \emph{CUCB} \cite{chen2016combinatorial}, \emph{zooming} \cite{kleinberg2019bandits}, and \emph{MINION} \cite{gholami2018adversary}. Zooming is an online learning algorithm that ignores decomposability, whereas CUCB uses decomposition but ignores similarity between arms. We use \emph{exploit history} as a naive baseline, which greedily exploits historical data with a static strategy. We compute the \emph{optimal} strategy exactly by solving a mixed-integer program over the true piecewise-linear reward functions, subject to the budget constraint. 
Experiments were run on a macOS machine with 2.4 GHz quad-core i5 processors and 16 GB memory. We take the mean of 30 trials.

Fig.~\ref{fig:performance} shows performance on both real-world and synthetic data, evaluated as the reward achieved at timestep~$t$, where the reward of historical exploit is 0 and of optimal is 1. The performance of LIZARD significantly surpasses that of the baselines throughout. 
On the real-world data, LIZARD provides a particular advantage over CUCB in the early rounds. 
Table~\ref{table:vary-parameters} shows the performance of each algorithm at $T=200$ and $500$, with varying values of $N$ and $B$. LIZARD beats the baselines in every scenario.

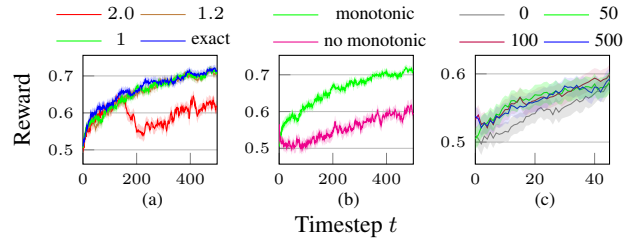
\begin{figure}
  \centering
  \begin{tikzpicture}
\pgfplotsset{
  width=.4\linewidth,
  height=.36\linewidth,
  xtick pos=left,
  ytick pos=left,
  tick label style={font=\tiny},
  ymajorgrids=true,
}

\begin{axis}[ 
  at={(0, 0\linewidth)},
  ylabel={\footnotesize{Reward}},
  xlabel near ticks,
  ylabel near ticks,
  legend style={
    align=left,
    at={(1.2,1.55)},
    legend columns=2,
    row sep=0cm,
    font=\scriptsize, 
    draw=none,
    fill=none,},
  xmin=0,
  xmax=500,
]
\addplot [red, thin, mark=none] table [x=t, y=L20, col sep=comma] {data/reward_vary_L.csv}; \addlegendentry{$2.0$}
\addplot [brown, thin, mark=none] table [x=t, y=L12, col sep=comma] {data/reward_vary_L.csv}; \addlegendentry{$1.2$}
\addplot [green, thin, mark=none] table [x=t, y=L10, col sep=comma] {data/reward_vary_L.csv}; \addlegendentry{$1$}
\addplot [blue, thin, mark=none] table [x=t, y=L_exact, col sep=comma] {data/reward_vary_L.csv}; \addlegendentry{exact}

\addplot [draw=none, mark=none, name path=L20_high] table [x=t, y expr=\thisrowno{13}+\thisrowno{14}, col sep=comma] {data/reward_vary_L.csv};
\addplot [draw=none, mark=none, name path=L20_low] table [x=t, y expr=\thisrowno{13}-\thisrowno{14}, col sep=comma] {data/reward_vary_L.csv};
\tikzfillbetween[of=L20_low and L20_high, on layer=bg]{red, opacity=0.2};

\addplot [draw=none, mark=none, name path=L12_high] table [x=t, y expr=\thisrowno{9}+\thisrowno{10}, col sep=comma] {data/reward_vary_L.csv};
\addplot [draw=none, mark=none, name path=L12_low] table [x=t, y expr=\thisrowno{9}-\thisrowno{10}, col sep=comma] {data/reward_vary_L.csv};
\tikzfillbetween[of=L12_low and L12_high, on layer=bg]{brown, opacity=0.2};

\addplot [draw=none, mark=none, name path=L10_high] table [x=t, y expr=\thisrowno{7}+\thisrowno{8}, col sep=comma] {data/reward_vary_L.csv};
\addplot [draw=none, mark=none, name path=L10_low] table [x=t, y expr=\thisrowno{7}-\thisrowno{8}, col sep=comma] {data/reward_vary_L.csv};
\tikzfillbetween[of=L10_low and L10_high, on layer=bg]{green, opacity=0.2};

\addplot [draw=none, mark=none, name path=exact_high] table [x=t, y expr=\thisrowno{15}+\thisrowno{16}, col sep=comma] {data/reward_vary_L.csv};
\addplot [draw=none, mark=none, name path=exact_low] table [x=t, y expr=\thisrowno{15}-\thisrowno{16}, col sep=comma] {data/reward_vary_L.csv};
\tikzfillbetween[of=exact_low and exact_high, on layer=bg]{blue, opacity=0.2};
\end{axis}

\begin{axis}[ 
  at={(.31\linewidth, 0)},
  xlabel absolute, xlabel style={xshift=0, yshift=-1},
  xlabel={\footnotesize{Timestep~$t$}},
  legend style={
    align=left,
    at={(1.2,1.55)},
    legend columns=1,
    font=\scriptsize, 
    draw=none,
    fill=none,},
  xmin=0,
  xmax=500,
]
\addplot [green, thin, mark=none] table [x=t, y=mono, col sep=comma] {data/reward_vary_mono.csv}; \addlegendentry{monotonic}
\addplot [magenta, thin, mark=none] table [x=t, y=no_mono, col sep=comma] {data/reward_vary_mono.csv}; \addlegendentry{no monotonic}


\addplot [draw=none, mark=none, name path=mono_high] table [x=t, y expr=\thisrowno{3}+\thisrowno{4}, col sep=comma] {data/reward_vary_mono.csv};
\addplot [draw=none, mark=none, name path=mono_low] table [x=t, y expr=\thisrowno{3}-\thisrowno{4}, col sep=comma] {data/reward_vary_mono.csv};
\tikzfillbetween[of=mono_low and mono_high, on layer=bg]{green, opacity=0.2};

\addplot [draw=none, mark=none, name path=no_mono_high] table [x=t, y expr=\thisrowno{1}+\thisrowno{2}, col sep=comma] {data/reward_vary_mono.csv};
\addplot [draw=none, mark=none, name path=no_mono_low] table [x=t, y expr=\thisrowno{1}-\thisrowno{2}, col sep=comma] {data/reward_vary_mono.csv};
\tikzfillbetween[of=no_mono_low and no_mono_high, on layer=bg]{magenta, opacity=0.2};

\end{axis}

\begin{axis}[ 
  at={(.62\linewidth, 0)},
  xlabel near ticks,
  ytick={.5, .6},
  legend style={
    align=left,
    at={(1.2,1.55)},
    legend columns=2,
    font=\scriptsize, 
    draw=none,
    fill=none,},
  xmin=0,
  xmax=45,
]

\addplot [gray, thin, mark=none] table [x=t, y=H0, col sep=comma] {data/reward_vary_H.csv}; \addlegendentry{0}
\addplot [green, thin, mark=none] table [x=t, y=H50, col sep=comma] {data/reward_vary_H.csv}; \addlegendentry{50}
\addplot [purple, thin, mark=none] table [x=t, y=H100, col sep=comma] {data/reward_vary_H.csv}; \addlegendentry{100}
\addplot [blue, thin, mark=none] table [x=t, y=H500, col sep=comma] {data/reward_vary_H.csv}; \addlegendentry{500}

\addplot [draw=none, mark=none, name path=H0_high] table [x=t, y expr=\thisrowno{1}+\thisrowno{2}, col sep=comma] {data/reward_vary_H.csv};
\addplot [draw=none, mark=none, name path=H0_low] table [x=t, y expr=\thisrowno{1}-\thisrowno{2}, col sep=comma] {data/reward_vary_H.csv};
\tikzfillbetween[of=H0_low and H0_high, on layer=main]{gray, opacity=0.2};

\addplot [draw=none, mark=none, name path=H100_high] table [x=t, y expr=\thisrowno{5}+\thisrowno{6}, col sep=comma] {data/reward_vary_H.csv};
\addplot [draw=none, mark=none, name path=H100_low] table [x=t, y expr=\thisrowno{5}-\thisrowno{6}, col sep=comma] {data/reward_vary_H.csv};
\tikzfillbetween[of=H100_low and H100_high, on layer=main]{purple, opacity=0.1};

\addplot [draw=none, mark=none, name path=H500_high] table [x=t, y expr=\thisrowno{7}+\thisrowno{8}, col sep=comma] {data/reward_vary_H.csv};
\addplot [draw=none, mark=none, name path=H500_low] table [x=t, y expr=\thisrowno{7}-\thisrowno{8}, col sep=comma] {data/reward_vary_H.csv};
\tikzfillbetween[of=H500_low and H500_high, on layer=main]{blue, opacity=0.1};

\addplot [draw=none, mark=none, name path=H50_high] table [x=t, y expr=\thisrowno{3}+\thisrowno{4}, col sep=comma] {data/reward_vary_H.csv};
\addplot [draw=none, mark=none, name path=H50_low] table [x=t, y expr=\thisrowno{3}-\thisrowno{4}, col sep=comma] {data/reward_vary_H.csv};
\tikzfillbetween[of=H50_low and H50_high, on layer=main]{green, opacity=0.2};

\end{axis}

\node[anchor=north] (a1) at (.11\linewidth, -.25){\scriptsize{(a)}};
\node[anchor=north] (a1) at (.42\linewidth, -.25){\scriptsize{(b)}};
\node[anchor=north] (a1) at (.73\linewidth, -.25){\scriptsize{(c)}};

\end{tikzpicture}
  \caption{Impact of (a)~different Lipschitz constants $L_i$, (b)~monotonicity assumption, (c)~varying amount of historical data. The green line in each plot depicts the setting used by LIZARD. Run on synthetic data with $N=25$, $B=1$.}
  \label{fig:vary_params}
\end{figure}

We return to the four characteristics of green security domains 
(Sec.~\ref{sec:characteristics})
and show that integrating each feature improves LIZARD. See Fig.~\ref{fig:vary_params} for results from our ablation study.
\textbf{Decomposability:} CUCB, a naive decomposed bandits algorithm, greatly exceeds the non-decomposed zooming algorithm throughout most of Table~\ref{table:vary-parameters}, demonstrating the value of decomposition. 
\textbf{Lipschitz-continuity:} Fig.~\ref{fig:vary_params}a reveals the value of information gained from knowing the exact value of the Lipschitz constant in each dimension ($L_i$ exact). We do not assume perfect knowledge of the Lipschitz constants $L_i$, instead selecting an approximate value $L_i = 1$ and using that same estimate across all dimensions. 
As shown, significantly overestimating $L_i$ with $L_i = 2$ hinders performance; this setting is closer to that of CUCB, with no Lipschitz continuity. 
\textbf{Monotonicity:} Fig.~\ref{fig:vary_params}b shows that the monotonicity assumption adds a significant boost to performance. \textbf{Historical data:} Fig.~\ref{fig:vary_params}c demonstrates the value of adding historical information in early rounds. By timestep 40, the benefit of historical data is negligible. 




\section{Conclusion}

We present LIZARD, an integrated algorithm for online learning in green security domains that leverages the advantages of decomposition and Lipschitz-continuity. With this approach, we transcend the proven lower regret bound of the zooming algorithm for Lipschitz bandits and extend combinatorial bandits to continuous spaces, showcasing their combined benefit. These results validate our approach of treating real-world conditions not as constraints but rather as useful features that lead to faster convergence. On top of achieving theoretical no-regret, we also demonstrate improved short-term performance empirically, increasing the usefulness of this approach in practice---particularly in high-stakes environments where we cannot compromise short-term reward. 

The next step is to move toward deployment. Researchers have demonstrated success deploying patrol strategies in the field to prevent poaching \cite{xu2020stay}. Our LIZARD algorithm could be directly implemented as-is by integrating historical data, estimating the Lipschitz constant, and setting the budget as the number of hours available per day. 

\section*{Ethics and Broader Impact}



This research has been conducted in close collaboration with domain experts to design our approach. Their insights have shaped the trajectory of our project. For example, our initial idea was to plan information-gathering patrols, taking an active learning approach to gather data where the predictive model was most uncertain. However, conservation experts pointed out in subsequent discussions that patrollers could not afford to spend time purely gathering data; they must prioritize preventing illegal poaching, logging, and fishing in the short-term. These priorities guided our project, particularly our focus on minimizing short-term regret as well as long-term regret. 

Our work is intended to serve as an assistive technology, helping rangers identify potentially snare-laden regions they otherwise might have missed, given that these parks can be up to thousands of square kilometers and rangers can only patrol a small number of regions at each timestep. We do not  intend this work to replace the critical role that rangers play in conservation management; no AI tool could substitute for the complex skills and domain insights that rangers and park managers have to plan and conduct patrols. 

A concern that might be raised could be that poachers who get access to this algorithm could predict where rangers might go and therefore place their snares more strategically. However, this algorithm would only be of use with access to the patrol data, which poachers would not have.

\bibliography{long,ref}

\appendix

\section{Proof of Theorem 2}

\fixedDiscretization*

\begin{proof}
We first analyze the discretization regret.
We then bound the arm selection regret by treating the problem as a combinatorial multi-armed bandit problem and apply a theorem given by \citet{chen2016combinatorial}.
Combining these two regret terms gives us the overall regret bound for the fixed discretization algorithm.

\paragraph{Discretization Regret}
Using discretization levels $\discretization$ with a minimum gap $\discretizationgap$, let $\text{OPT}_{\discretization}$ be the true optimum value and $\opteffortvec_{\discretization}$ be the optimal solution when the rewards of all discretized points are fully known.
Let the true optimum and the optimal solution without discretization be $\text{OPT}$ and $\opteffortvec$.
Then we have
\begin{small}
\begin{align}
    \text{OPT} &= \mu(\opteffortvec) = \sum_{i=1}^N \mu_i(\opteffort{i}) \nonumber \\
    &\leq \sum_{i=1}^N \left( \mu_i( \left \lfloor{\opteffort{i}} \right \rfloor_{\discretization}) + L \lvert \opteffort{i} - \left \lfloor{\opteffort{i}} \right \rfloor_{\discretization} \rvert \right) \nonumber \\
    & \leq \sum_{i=1}^N \left( \mu_i( \left \lfloor{\opteffort{i}} \right \rfloor_{\discretization}) + L \discretizationgap \right) \nonumber \\
    & = \sum_{i=1}^N \mu_i( \left \lfloor{\opteffort{i}} \right \rfloor_{\discretization}) + \sum_{i=1}^N L \discretizationgap \nonumber \\ 
    & \leq \text{OPT}_\discretization + N L \discretizationgap \ .
\label{eqn:discrete_continuous_gap}
\end{align}
\end{small}%
This yields $\text{OPT} - \text{OPT}_{\discretization} \leq N L\discretizationgap$, which is the discretization regret incurred by the given discretization.

\paragraph{Selection Regret}
Given a discretization $\discretization$, we treat each target with a specified patrol effort as a base arm.
If we assume the smallest discretization gap is $\discretizationgap$, we can bound the total number of base arms by $N / \discretizationgap$.
We denote a feasible patrol coverage as a super arm: a set of base arms that must satisfy the budget constraints and the restriction of selecting exactly one effort level per target.
The set of all feasible super arms implicitly forms a feasibility constraint, which fits into the context of the combinatorial multi-armed bandit problem \cite{chen2016combinatorial}.
In addition, our Lipschitz reward function also satisfies the bounded smoothness condition with
\begin{small}
\begin{align}
& \lvert \mu(\effortvec) - \mu'(\effortvec) \rvert = \left\lvert \sum_{i} \mu_i(\effort{i}) - \mu'_i(\effort{i}) \right\rvert \nonumber \\
& \leq N \max_{i, \effort{i}} \lvert \mu_i(\effort{i}) - \mu'_i(\effort{i}) \rvert = N \Vert \mu - \mu' \Vert_{\infty} = f(\Lambda) \label{eqn:monotone_function}
\end{align}
\end{small}%
where $f(\Lambda) = N \Lambda, \Lambda = \Vert \mu - \mu' \Vert_{\infty}$ is a linear function of $\Lambda$.
So we can apply the regret bound of combinatorial multi-armed bandit with polynomial bounded smoothness function $f(x) = \gamma x^\omega$ (Theorem 2 given by \citeauthor{chen2016combinatorial})
\begin{small}
\begin{align}
    \text{Reg}(T)
    \leq \frac{2\gamma}{2 - \omega} (6 m \log{T})^{\frac{\omega}{2}} \cdot T^{1 - \frac{\omega}{2}} + \left( \frac{\pi^2}{3} + 1 \right) m \maxregret \nonumber
\end{align}
\end{small}%
where $\omega = 1, \gamma = N$, $m$ is the number of base arms, and $\maxregret$ is the maximum regret that can be achieved.
In our domain, the number of base arms is bounded by $m \leq N / \discretizationgap$.
The maximum regret is bounded by the maximum reward achievable, which can be bounded by monotonicity and Lipschitz-continuity:
\begin{small}
\begin{align*}
\maxregret \leq \mu(\opteffortvec) = \sum_{i} \mu_i(\opteffort{i}) \leq N L \ .
\end{align*}
\end{small}%
By substituting $\omega=1$, $\gamma=N$, and using upper bounds for $m$ and $\maxregret$, we get a valid regret bound of Algorithm~\ref{alg:decomposed} with fixed discretization gap $\discretizationgap$:
\begin{small}
\begin{align}\label{eqn:discrete_regret_bound}
\text{Reg}_{\discretizationgap}^{\text{discrete}}(T) \leq 2 N \sqrt{\frac{6NT \log{T}}{\discretizationgap}} + \left( \frac{\pi^2}{3} + 1 \right) \frac{N^2L}{\discretizationgap} \ .
\end{align}
\end{small}%
This regret is defined with respect to the optimum over all feasible discrete selections.
Therefore, we can combine Inequalities~\ref{eqn:discrete_continuous_gap} and~\ref{eqn:discrete_regret_bound} to derive the regret with respect to the true optimum
\begin{footnotesize}
\begin{align}
\text{Reg}_{\discretizationgap}(T) & \leq 2 N \sqrt{\frac{6NT \log{T}}{\discretizationgap}} + \left( \frac{\pi^2}{3} + 1 \right) \frac{N^2L}{\discretizationgap} + NL\discretizationgap T \nonumber \\
& = O \left( \sqrt{\frac{N^3 T \log{T}}{\discretizationgap}} + \frac{N^2L}{\discretizationgap} + NL\discretizationgap T \right) \label{eqn:fixed_regret_bound_clear}
\end{align}
\end{footnotesize}%
where the gap in Inequality~\ref{eqn:discrete_continuous_gap} is for each iteration, so must be multiplied by the number of timesteps $T$.
\end{proof}

\section{Proof of Theorem~\ref{thm:adaptive_discretization}}

\adaptiveDiscretization*

\begin{proof}
We want to set $2 N \sqrt{6NT \log{T}/\discretizationgap}$ and $NL \discretizationgap T$ in Eq.~\ref{eqn:fixed_regret_bound_clear} to be of the same order.
This is equivalent to solving
\begin{small}
\begin{align*}
    & O \left( N \sqrt{N T \log{T} / \discretizationgap} \right) = O(NL \discretizationgap T) \\
    \iff & O \left( \frac{T}{\log{T}} \right) = O \left( \frac{N}{L^2 \discretizationgap^3} \right) \ .
\end{align*}
\end{small}%
Applying the fact $O \left( T / \log{T} \right) = c \quad \Longrightarrow \quad T = O(c \log{c})$ for any constant $c$ yields
\begin{small}
\begin{align}
T_\discretizationgap = O \left( \frac{N}{L^2 \discretizationgap^3} \log{\frac{N}{L^2 \discretizationgap^3}} \right) \nonumber
\end{align}
\end{small}%
which is the optimal stopping condition to let the two terms of the regret bound in Eq.~\ref{eqn:fixed_regret_bound_clear} be of the same order.
We set
\begin{small}
\begin{align}
T_\discretizationgap = \frac{N}{L^2 \discretizationgap^3} \log{\frac{N}{L^2 \discretizationgap^3}} \ . \nonumber
\end{align}
\end{small}%
Substitute this into Eq.~\ref{eqn:fixed_regret_bound_clear} with $t \leq T_\discretizationgap = \frac{N}{L^2 \discretizationgap^3} \log{\frac{N}{L^2 \discretizationgap^3}}$ and $\log{t} \leq \log{T_\discretizationgap} = \log{\left( \frac{N}{L^2 \discretizationgap^3} \log{\frac{N}{L^2 \discretizationgap^3}} \right)} \leq 2 \log \frac{N}{L^2 \discretizationgap^3}$ to get
\begin{small}
\begin{align}
& \quad \text{Reg}_{\discretizationgap}(t) \leq \text{Reg}_{\discretizationgap}(T_\discretizationgap) \nonumber \\
& \leq 2 N \sqrt{\frac{6NT_\discretizationgap \log{T_\discretizationgap}}{\discretizationgap}} + \left( \frac{\pi^2}{3} + 1 \right) \frac{N^2L}{\discretizationgap} + NL\discretizationgap T_\discretizationgap \nonumber \\
& = (4 \sqrt{3} + 1) \frac{N^2 \log{\frac{N}{L^2 \discretizationgap^3}}}{L \discretizationgap^2} + \left( \frac{\pi^2}{3} + 1 \right) \frac{N^2L}{\discretizationgap} \ . \nonumber 
\end{align}
\end{small}%
Due to the discretization regret term, the bound given by Eq.~\ref{eqn:fixed_regret_bound_clear} is not a sublinear function in $T$, so is not no-regret.
To achieve no-regret, we need to adaptively adjust the discretization levels.
Therefore, as described in Algorithm~\ref{alg:adaptive_decomposed}, we gradually reduce the discretization gap $\discretizationgap_i = 2^{-i}$ depending on the number of timesteps elapsed.

For total timesteps $T$, let $k \in \N \cup \{0\}$ such that $\sum_{i=0}^{k-1} T_{\discretizationgap_i} \leq T \leq \sum_{i=0}^{k} T_{\discretizationgap_i}$.
Then we can bound the regret at time $T$ by the regret at time $\sum_{i=0}^{k} T_{\discretizationgap_i}$, which yields
\begin{footnotesize}
\begin{align}
    & \quad \text{Reg}(T) \leq \text{Reg} \left( \sum_{i=0}^{k} T_{\discretizationgap_i} \right) \leq \sum_{i=0}^k \text{Reg}_{\discretizationgap_i}(T_{\discretizationgap_i}) \nonumber \\
    & \leq \sum\limits_{i=0}^k \left( (4 \sqrt{3} + 1) \frac{N^2 \log{\frac{N}{L^2 \discretizationgap_i^3}}}{L \discretizationgap_i^2} + \left( \frac{\pi^2}{3} + 1 \right) \frac{N^2L}{\discretizationgap_i} \right) \nonumber \\
    & \leq \sum\limits_{i=0}^k \left( (4 \sqrt{3} + 1) \frac{N^2}{L} 2^{2i} \log{ \left( \frac{N}{L^2} 2^{3i} \right) } + \left( \frac{\pi^2}{3} + 1 \right) N^2L 2^i \right) \nonumber \\
    & \leq (4 \sqrt{3} + 1) \frac{N^2}{L} 2^{2k+2} \log{\left( \frac{N}{L^2} 2^{3k+3} \right)} + \left( \frac{\pi^2}{3} + 1 \right) N^2L 2^{k+1} \ . \nonumber
\end{align}
\end{footnotesize}%
Since we know
\begin{footnotesize}
\begin{align}
    T & = O \left( \sum_{i=0}^{k} T_{\discretizationgap_i} \right) \nonumber \\
    \Longrightarrow T & = O \left( \sum_{i=0}^{k} \frac{N}{L^2 \discretizationgap_i^3} \log{\frac{N}{L^2 \discretizationgap_i^3}} \right) \nonumber \\
    \Longrightarrow T & = O\left(\sum_{i=0}^{k} \frac{N}{L^2} 2^{3i} \log{\left( \frac{N}{L^2} 2^{3i} \right)} \right) \nonumber \\
    \Longrightarrow T & = O \left( \frac{N}{L^2} 2^{3k+3} \log{\left( \frac{N}{L^2} 2^{3k+3} \right)} \right) \ , \nonumber
\end{align}
\end{footnotesize}%
we therefore have $T = O(\frac{N}{L^2} 2^{3k} \log{(\frac{N}{L^2} 2^{3k})})$, which yields $2^{3k} = O(\frac{L^2 T}{N \log{T}})$.
Replacing all $2^{k}$ with $O((\frac{L^2 T}{N \log{T}})^{\frac{1}{3}})$ provides a regret bound dependent on $T$ and $N$ only:
\begin{small}
\begin{align}
    & \text{Reg}(T) \leq O \left( \frac{N^2}{L} 2^{2k} \log{\left( \frac{N}{L^2} 2^{3k} \right)} + N^2 L 2^k \right) \nonumber \\
    & \leq O \left(\frac{N^2}{L} \left( \frac{L^2 T}{N \log{T}} \right)^{\frac{2}{3}} \log{\left( \frac{T}{\log{T}} \right)} + N^2 L \left(\frac{L^2 T}{N \log{T}} \right)^{\frac{1}{3}} \right) \nonumber \\
    & \leq O \left(L^{\frac{1}{3}} N T^{\frac{2}{3}} (\log{T})^{\frac{1}{3}} + L^{\frac{5}{3}} N^{\frac{5}{3}}  T^{\frac{1}{3}} (\log{T})^{-\frac{1}{3}} \right) \ .
    \label{eqn:adaptive_regret_bound_clear}
\end{align}
\end{small}%
We only care about the regret order in terms of the dominating variable $T$, so the second term is dominated by the first term.
Therefore, $\text{Reg}(T) \leq O\left(L^{\frac{1}{3}} N T^{\frac{2}{3}} (\log{T})^{\frac{1}{3}} \right)$.
\end{proof}

\lipschitzCUCB*
To prove Theorem~\ref{thm:lipschitz_cucb}, we first describe a weaker version in Lemma~\ref{thm:confidence_bounds}.
Lemma~\ref{thm:confidence_bounds} shows that if all arms are sufficiently sampled, the probability that we hit a bad super arm is small.
Lemma~\ref{thm:confidence_bounds} attributes all the bad super arm selections with error greater than $\minregret$ to a maximum regret $\maxregret$, which overestimates the overall regret and can be tightened with a more careful analysis (omitted here due to space).

\begin{restatable}[]{lemma}{confidenceBounds}\label{thm:confidence_bounds}
Given the prior knowledge of Lipschitz-continuity, monotonicity, and the definition of $\ucb$ in Sec.~\ref{sec:algorithm}, the regret bound of CUCB algorithm~\cite{chen2016combinatorial} holds with
\begin{align}\label{eqn:tighter_confidence_bound}
\text{Reg}(T) \leq & \left( \frac{6 m \log{T}}{(f^{-1}(\minregret))^2} + \frac{\pi^2}{3} m^2 + m \right) \maxregret \ .
\end{align}
\end{restatable}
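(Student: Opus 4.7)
The plan is to show that the tightened definition of $\ucb_t(i,j)$ in Eq.~\ref{eqn:tighter_ucb} preserves the single property on which the CUCB regret analysis of \citeauthor{chen2016combinatorial}~\shortcite{chen2016combinatorial} depends: at every timestep $t$, $\ucb_t(i,j)$ is a valid upper confidence bound on the true mean $\mu_i(\discretizationlevel{j})$ with sufficiently high probability. Once this is established, the three-term regret bound of Eq.~\ref{eqn:tighter_confidence_bound} follows by invoking the CUCB analysis verbatim, using the bounded smoothness function $f(\Lambda)=N\Lambda$ already verified in Eq.~\ref{eqn:monotone_function}.

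First, I would verify the upper-bound property. For each individual arm $(u,v)$, the standard Hoeffding concentration argument used in UCB1 and CUCB gives $\selfucb_t(u,v) \geq \mu_u(\discretizationlevel{v})$ with probability at least $1 - 2t^{-3}$, which is exactly the event exploited by Chen et al. For any other arm $(i,j)$, combining Lipschitz-continuity across feature and effort space with monotonicity yields the deterministic inequality
\begin{align*}
\mu_i(\discretizationlevel{j}) \leq \mu_u(\discretizationlevel{v}) + L\bigl(\max\{0,\discretizationlevel{v}-\discretizationlevel{j}\} + \mathcal{D}(\featurevec{i},\featurevec{u})\bigr),
\end{align*}
where the $\max\{0,\cdot\}$ term is justified by monotonicity: if $\discretizationlevel{v}<\discretizationlevel{j}$, then $\mu_i(\discretizationlevel{j}) \leq \mu_i(\discretizationlevel{j})$ can be compared through the feature term alone since increasing effort never decreases reward on the same target. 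Chaining these two inequalities gives $\mu_i(\discretizationlevel{j}) \leq \selfucb_t(u,v) + L\cdot\textit{dist}$ with high probability, and taking the minimum over $(u,v)$ preserves the upper bound. Hence $\ucb_t(i,j) \geq \mu_i(\discretizationlevel{j})$ with the same per-arm failure probability as in CUCB.

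Next, I would invoke the CUCB machinery. Their proof decomposes regret into (i) timesteps in which some base arm has too few samples, contributing the $6m\log T/(f^{-1}(\minregret))^2$ term; (ii) concentration failures, union-bounded over $m$ arms and all timesteps to give the $(\pi^2/3)\, m^2$ term; and (iii) a single boundary round contributing $m$. Each term is multiplied by $\maxregret$ to convert per-round losses into cumulative regret. Since our $\ucb_t$ is always no larger than $\selfucb_t$ (it is the minimum of terms including $\selfucb_t$ itself) but still satisfies the upper-bound property, none of the three terms can worsen, and the bound in Eq.~\ref{eqn:tighter_confidence_bound} follows.

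The main obstacle is the first step: one must carefully check that the minimum operation in Eq.~\ref{eqn:tighter_ucb} does not destroy the concentration guarantee that drives Chen et al.'s bound. Because the min is taken over arms whose $\selfucb$s individually concentrate, and because the Lipschitz offsets are deterministic, the union-bound overhead is unchanged. Monotonicity is needed only to justify truncating the effort component of $\textit{dist}$ at zero; without it, the min over $(u,v)$ could strictly exceed $\mu_i(\discretizationlevel{j})$ only by the relaxed distance, which would still yield a valid but looser UCB.
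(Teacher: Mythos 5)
Your proposal takes essentially the same route as the paper's proof: both argue that each Lipschitz-translated $\selfucb$ is a valid upper bound on the true mean under the same Hoeffding concentration event used by CUCB, that the minimum of valid bounds (which is still at most $\selfucb_t$, hence within twice the confidence radius of the mean) therefore preserves exactly the sandwich property the CUCB analysis needs, and then invoke Chen et al.'s regret decomposition with the bounded smoothness function $f(\Lambda)=N\Lambda$. The only blemish is your justification of the $\max\{0,\cdot\}$ term, which runs monotonicity in the wrong direction --- the effort slack can be dropped precisely when the \emph{reference} arm has the larger effort (so that $\mu_u(\discretizationlevel{j})\leq\mu_u(\discretizationlevel{v})$), while a Lipschitz slack proportional to the effort gap is required in the opposite case, as depicted in Fig.~\ref{fig:ucb}c --- but this does not change the structure or validity of the argument.
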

\begin{proof}
The main proof relies on whether the tighter UCB defined in Sec.~\ref{sec:algorithm} works for the combinatorial multi-armed bandit problem.
Specifically, we must confirm that the proof of CUCB given by \citet{chen2016combinatorial} works with a different confidence bound definition.

Define $T_{i,t}$ as the number of pulls of arm $i$ up to time $t$.
We assume there are $m$ total base arms.
Let $\mathbf{\mu} = [\mu_1, \mu_2, \ldots, \mu_m]$ be the vector of true expectations of all base arms.
Let $X_{i,t}$ be the revelation of arm $i$ from a pull at time~$t$ and $\bar{X}_{i,s} = (\sum_{j=1}^s X_{i,j})/s$ be the average reward of the first $s$ pulls of arm $i$.
Therefore, we write $\bar{X}_{i,T_{i,t}}$ to represent the average reward of arm $i$ at time $t$ as defined in Sec.~\ref{sec:algorithm}.
Let $S_t$ be the super arm pulled in round~$t$.
We say round~$t$ is bad if $S_t$ is not the optimal arm.
Define event $E_t = \{ \forall i \in [m], \lvert \bar{X}_{i, T_{i,t-1}} - \mu_i \rvert \leq r'_t(i) \}$, where recall that $r_t(i) = \sqrt{3\log{t} (2 T_{i,t-1})^{-1}}$ is the standard confidence radius, and $r'_t(i) = \ucb_t(i) - \bar{X}_{i, T_{i,t-1}} \leq \selfucb_t(i) - \bar{X}_{i, T_{i,t-1}} = r_t(i)$ is the tighter confidence bound we introduce in Sec.~\ref{sec:algorithm}.
We want to bound the probability that all the sampled averages are close to the true mean value $\mu_i$ with distance at most the confidence radius $r'_t(i) ~\forall i$.

At time $t$, the probability that the sampled mean $\bar{X}_{i,T_{i,t-1}}$ of arm $i$ lies within the ball with center $\mu_i$ and radius $r_t(i)$ is bounded by $1 - 2 t^{-3}$ (double-sided Hoeffding bound), which characterizes the probability of having one single $\selfucb$ bound hold for arm $i$.
Moreover, at a fixed time $t$, if we have all $m$ $\selfucb$ bounds hold, then all the $m$ $\ucb$ bounds will automatically hold because each bound introduced by the Lipschitz continuity from any arm is a valid bound and thus $\ucb$ bound, the minimum of all the valid bounds, must also hold.
Therefore, using the notation of $\selfucb$ radius $r'_t(i)$, we have

\begin{small}
\begin{align}
\text{prob} \{ \neg E_t \} = & \text{prob}\{ \lvert \bar{X}_{i, T_{i,t-1}} - \mu_i \rvert \leq r'_t(i) ~\forall i \in [m] \} \nonumber \\
\geq & \text{prob}\{ \lvert \bar{X}_{i, s} - \mu_i \rvert \leq r'_t(i) ~\forall i \in [m], ~\forall s \in [t] \} \nonumber \\
\geq & \text{prob}\{ \lvert \bar{X}_{i, s} - \mu_i \rvert \leq r_t(i) ~\forall i \in [m], ~\forall s \in [t] \} \nonumber \\
\geq & 1 - m t \cdot 2 t^{-3} \nonumber \\
= & 1 - 2 m t^{-2} \nonumber
\end{align}
\end{small}%
where the second inequality is due to the fact that $\lvert \bar{X}_{i, s} - \mu_i \rvert \leq r_t(i) ~\forall i \in [m] \Rightarrow \lvert \bar{X}_{i, s} - \mu_i \rvert \leq r'_t(i) ~\forall i \in [m]$ under the Lipschitz condition, and the last inequality is due to the union bound across all $m$ arms and $t$ time steps. Equivalently, we have $ \neg \text{prob} \{ \neg E_t \} \leq 2 m t^{-2}$.

According to the definition $r'_t(i) = \ucb_t(i) - \bar{X}_{i, T_{i,t-1}}$, $E_t$ is true (thus $|\bar{X}_{i,T_{i,t-1}} - \mu_i| \leq r'_t(i)$) implies $| \ucb_t(i) - \mu_i | \leq 2 r'_t(i) $ and $\ucb_t(i) \geq \mu_i ~\forall i$.
Let $\Lambda \coloneqq \sqrt{\frac{3 \log{t}}{2l_t}}$ and $\Lambda'_t \coloneqq \max_{i \in S_t} r'_t(i) \leq \max_{i \in S_t} r_t(i) \coloneqq \Lambda_t$.
Then we have
\begin{small}
\begin{align}
    & E_t \Rightarrow ~\forall i \in S_t, \lvert \ucb_t(i) - \mu_i \rvert \leq 2 r'_t(i) \leq 2 \Lambda'_t \nonumber \\
    & \{ S_t \text{ is bad}, ~\forall i \in S_t, T_{i,t-1} > l_t \} \Rightarrow \Lambda > \Lambda_t \geq \Lambda'_t \ . \nonumber
\end{align}
\end{small}%
Let $\ucb_t = [\ucb_t(1), \ucb_t(2), \dots, \ucb_t(m)]$ be a vector of UCBs of all subarms.
If the above two events on the left-hand side both hold $\{E_t, S_t \text{ is bad}, ~\forall i \in S_t, T_{i,t-1} > l_t \}$ at time $t$, if we denote $\reward_\mu(S) = \sum_{i \in S} \mu_i$, we have
\begin{small}
\begin{align}
& \reward_{\mathbf{\mu}}(S_t) + f(2 \Lambda)
> \reward_\mathbf{\mu}(S_t) + f(2 \Lambda'_t ) \nonumber \\
\geq & \reward_{\ucb_t}(S_t) = \text{opt}_{\ucb_t} \geq \reward_{\ucb_t}(S^\star_{\mathbf{\mu}}) \nonumber \\
\geq & \reward_{\mathbf{\mu}}(S^\star_{\mathbf{\mu}}) \geq \text{opt}_{\mathbf{\mu}} \label{eqn:appendix_selection_bound}
\end{align}
\end{small}%
where the first inequality is due to the monotonicity of function~$f$ and $\Lambda > \Lambda'_t$; 
the second is due to the definition of bounded smoothness function $f$ in Eq.~\ref{eqn:monotone_function} and the condition that $ E_t \Rightarrow \lvert \ucb_t(i) - \mu_i \rvert \leq 2 \Lambda'_t ~\forall i \in S_t$; 
the third is due to the optimal selection of $S_t$ with respect to $\ucb_t$; 
the fourth is due to the optimality of $\text{opt}_{\ucb_t}$, where $S^\star_{\mathbf{\mu}}$ is the true optimal solution with respect to the real $\mathbf{\mu}$; 
the fifth is due to the upper confidence bound between $\ucb_t \geq \mathbf{\mu}$ when $E_t$ is true; and the sixth is again due to the optimality of $S^\star_{\mathbf{\mu}}$.

By substituting $l_t = \frac{6 \log{t}}{ (f^{-1}(\minregret))^2}$ into $\Lambda = \sqrt{\frac{3\log{t}}{2 l_t}}$, we have $f(2 \Lambda) = \minregret$.
Eq.~\ref{eqn:appendix_selection_bound} contradicts the definition of minimum regret $\minregret$, the smallest non-zero regret (since $S_t$ is bad thus not optimal but it is also closer to the optimal with distance smaller than $\minregret$).
That is:
\begin{small}
\begin{align}
    & \text{prob}\{ E_t, S_t \text{ is bad}, ~\forall i \in S_t, T_{i,t-1} > l_t \} = 0 \nonumber \\
    \Rightarrow & \text{prob}\{ S_t \text{ is bad}, ~\forall i \in S_t, T_{i,t-1} > l_t \} \leq \text{prob}\{\neg E_t\} \leq 2 m t^{-2} \ .\nonumber
\end{align}
\end{small}%
Lastly, we bound the total number of bad arm pulls by the probability that $E_t$ did not happen and a time-dependent term as shown by \citet{chen2016combinatorial} (proof omitted for space):
\begin{small}
\begin{align}
    \mathbb{E} \left[ \text{\# bad arm pulls} \right] \leq & m (l_T+1)  + \sum_{i=1}^m 2m t^{-2} \nonumber \\
    \leq & \frac{6 m \log{T}}{(f^{-1}(\minregret))^2} + m + \frac{\pi^2}{3} m \ . \nonumber
\end{align}
\end{small}%
We then bound the cumulative regret by attributing the maximum regret to each of the bad arm pulls, which yields:
\begin{small}
\begin{align*}
    \text{Reg}(T) \leq & \left( \frac{6 m \log{T}}{(f^{-1}(\minregret))^2} + m + \frac{\pi^2}{3} m \right) \maxregret \ . \qedhere
\end{align*}
\end{small}%
\end{proof}

\begin{proof}[Proof of Theorem~\ref{thm:lipschitz_cucb}]
As shown by \citet{chen2016combinatorial}, the first term in Eq.~\ref{eqn:tighter_confidence_bound} can be further broken down by having a finer regret attribution.
Now we assign a maximum regret to all the bad super arm pulls, while we can in fact more carefully analyze the cumulative regret by counting the number of bad arm pulls resulting in a certain amount of regret.
The only part different between ours and \citet{chen2016combinatorial} is the last term in Eq.~\ref{eqn:tighter_confidence_bound}, which is not affected by the additional analysis.
So we conclude that the same argument still applies, where when the bounded smoothness function is of the form $f(x) = \gamma x^\omega$, we can get a similar result:
\begin{small}
\begin{align*}
    \text{Reg}(T) \leq & \frac{2 \gamma}{2 - \omega} (6 m \log{T})^{\frac{\omega}{2}} T^{1 - \frac{\omega}{2}} +  \left( \frac{\pi^2}{3} + 1 \right) m \maxregret \ . \quad \qedhere
\end{align*}
\end{small}%
\end{proof}

Finally, we are ready to prove Theorem~\ref{thm:lipschitz_regret_bound}. 

\lipschitzRegretBound*
\begin{proof}
Eqs.~\ref{eqn:lipschitz_fixed_regret_bound} and~\ref{eqn:lipschitz_adaptive_regret_bound} can be proved by the same argument of the proof of Theorem~\ref{thm:fixed_discretization} and~\ref{thm:adaptive_discretization}. As shown in Eq.~\ref{eqn:adaptive_regret_bound_clear}, we get:
\begin{footnotesize}
\begin{align}
& \quad \text{Reg}(T) \nonumber \\
& \leq O \left(\frac{N^2}{L} \left( \frac{L^2 T}{N \log{T}} \right)^{\frac{2}{3}} \log{\left( \frac{T}{\log{T}} \right)} + N^2 L \left(\frac{L^2 T}{N \log{T}} \right)^{\frac{1}{3}} \right) \nonumber \\
& \leq O \left(L^{\frac{1}{3}} N T^{\frac{2}{3}} (\log{T})^{\frac{1}{3}} + L^{\frac{5}{3}} N^{\frac{5}{3}}  T^{\frac{1}{3}} (\log{T})^{-\frac{1}{3}} \right) \nonumber \\
& = O \left(L^{\frac{1}{3}} N T^{\frac{2}{3}} (\log{T})^{\frac{1}{3}} \right) \label{eqn:lipschitz_adaptive_regret_bound_clear}
\end{align}
\end{footnotesize}%
\end{proof}

\section{Experiment details}

The features associated with each target in Srepok Wildlife Sanctuary are distance to roads, major roads, rivers, streams, waterholes, park boundary, international boundary, patrol posts, and villages; forest cover; elevation and slope; conservation zone; and animal density of banteng, elephant, muntjac, pig. 

For LIZARD, CUCB, and zooming, we speed up the confidence radius from Eq.~\ref{eq:conf} as $r_t(i, j) = \sqrt{\frac{\epsilon}{2 n_t(i, j)}}$ with $\epsilon = 0.1$ to address the limited time horizon, which we empirically found to improve performance (no choice of $\epsilon$ affects the order of the algorithm's performance).


\end{document}